\documentclass{article}

\usepackage{microtype}
\usepackage{graphicx}
\usepackage{subcaption}
\usepackage{booktabs}
\usepackage{multirow}
\usepackage{stackengine}
\usepackage{enumitem}
\usepackage{array}
\usepackage{makecell}
\usepackage[svgnames]{xcolor}
\usepackage[accsupp]{axessibility}
\usepackage{hyperref}

\usepackage[accepted]{icml2026}

\usepackage{amsmath}
\usepackage{amssymb}
\usepackage{mathtools}
\usepackage{amsthm}

\usepackage[capitalize,noabbrev]{cleveref}

\usepackage{thmtools, thm-restate}
\theoremstyle{plain}
\newtheorem{theorem}{Theorem}[section]

\newtheorem{lemma}[theorem]{Lemma}

\theoremstyle{definition}
\newtheorem{definition}[theorem]{Definition}

\newtheorem{remark}[theorem]{Remark}

\DeclareMathOperator*{\argmin}{arg\,min}

\newcommand{\OurAlgo}{PeRQ}

\newcommand{\gray}[1]{\textcolor{gray}{#1}}
\renewcommand{\paragraph}[1]{\textbf{#1}~}

\renewcommand{\algorithmiccomment}[1]{\hfill\(\triangleright\)~\textit{#1}}

\icmltitlerunning{Pushing the Limits of Block Rotations in Post-Training Quantization}

\begin{document}

\twocolumn[
  \icmltitle{Pushing the Limits of Block Rotations in Post-Training Quantization}

  \icmlsetsymbol{equal}{*}

  \begin{icmlauthorlist}
    \icmlauthor{Sai Sanjeet}{equal,amd,suny}
    \icmlauthor{Ian Colbert}{equal,amd}
    \icmlauthor{Pablo Monteagudo-Lago}{amd}
    \icmlauthor{Giuseppe Franco}{amd} \\
    \icmlauthor{Yaman Umuroglu}{amd,ntnu}
    \icmlauthor{Nicholas J. Fraser}{amd}
  \end{icmlauthorlist}

  \icmlaffiliation{amd}{Advanced Micro Devices, Inc. (AMD)}
  \icmlaffiliation{suny}{State University of New York at Buffalo}
  \icmlaffiliation{ntnu}{Norwegian University of Science and Technology}

  \icmlcorrespondingauthor{Ian Colbert}{\texttt{ian.colbert@amd.com}}

  \icmlkeywords{PeRQ, MixQuant, Rotation, Block Rotation, Permutation, Quantization, Brevitas, Machine Learning, Language Models, Outliers, ICML}

  \vskip 0.3in
]

\printAffiliationsAndNotice{\icmlEqualContribution}

\begin{abstract}
Recent post-training quantization (PTQ) methods have adopted block rotations to diffuse outliers prior to rounding.
While this reduces the overhead of online full-vector rotations, the effect of block structure on outlier suppression remains poorly understood.
To fill this gap, we present the first systematic, non-asymptotic analysis of outlier suppression for block Hadamard rotations.
Our analysis reveals that outlier suppression is fundamentally limited by the geometry of the input vector.
In particular, in the deterministic worst case, post-rotation outliers are minimized when the pre-rotation \( \ell_1 \) norm mass is evenly distributed across blocks.
Guided by these insights, we introduce \OurAlgo{} (\textbf{Pe}rmute, \textbf{R}otate, then \textbf{Q}uantize), a PTQ framework that redistributes activation mass via permutations prior to rotation.
We propose a greedy mass diffusion algorithm to calibrate permutations by equalizing the expected blockwise \( \ell_1 \) norms.
To avoid adding inference overhead, we identify permutation-equivariant regions in transformer architectures to merge these permutations into model weights before deployment.
Experiments show that \OurAlgo~consistently improves accuracy across all block sizes, recovering up to 90\% of the full-vector rotation perplexity when quantizing Llama3 1B to INT4 with block size 16, compared to 46\% without permutations.
\end{abstract}

\section{Introduction}
\label{sec:introduction}

Post-training quantization (PTQ) remains essential for improving the efficiency of large language models (LLMs), yet activation outliers continue to limit model accuracy in few-bit settings. Recent methods insert rotations into the compute graph to diffuse these outliers across vector coordinates, often substantially improving accuracy. However, while some rotations can be merged into model weights before deployment, others must be computed online during inference, incurring non-trivial runtime overheads.

A natural approach to reduce the cost of online rotations is to partition activation vectors into fixed-sized blocks and apply rotations independently within each block. When the rotations are Hadamard matrices, this reduces compute requirements from \( \mathcal{O}( d \log d ) \) to \( \mathcal{O} ( d \log b ) \) for a \( d \)-dimensional activation vector and block size \( b \).
In practice, this can yield meaningful runtime improvements: 
\citealt{shao2025block} report a \( 1.5\times \) reduction in online rotation overhead for Llama2 7B running at MXFP4 using block Hadamard rotations with \( b = 32 \), compared to full-vector Hadamard rotations, corresponding to a 2\% reduction in end-to-end latency.
However, this efficiency gain is not free.
{We prove that reducing block size \( b \) can increase worst-case post-rotation outliers}
with high probability, later formalized in Proposition~\ref{prop:probablist_evolution}.
Figure \ref{fig:block_size_activation_plots} illustrates this effect {within} Llama3 1B: while full-vector rotations substantially reduce activation ranges, smaller block rotations are often less effective at suppressing outliers.
As a result, block rotations can introduce a trade-off between computational efficiency and model accuracy for quantized LLMs.
{In this work, we identify and mitigate fundamental limitations of block Hadamard rotations to improve outlier suppression while retaining efficiency.}

\begin{figure*}[t]
    \centering
    \includegraphics[width=0.91\linewidth]{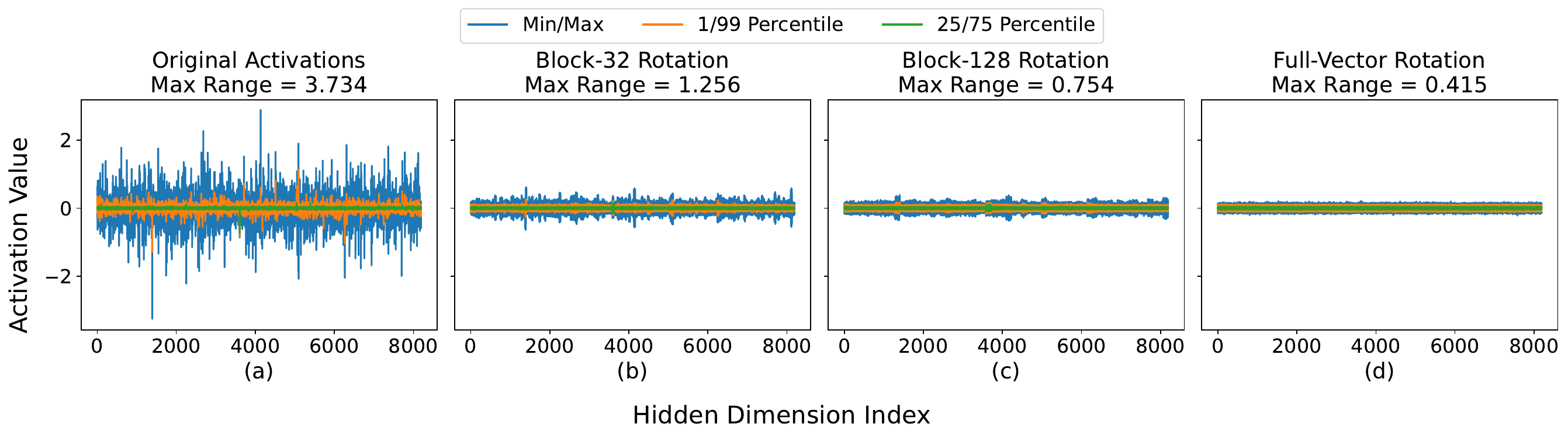}
    \caption{We visualize input activation distributions sampled from 2048 tokens of WikiText2 at the third down projection layer in Llama3 1B under four configurations: (a) original model, (b) block Hadamard rotation with \( b = 32 \), (c) block Hadamard rotation with \( b = 128 \), and (d) full-vector rotation. As \( b \to d \), the activation range decreases, showing block rotations can reduce outlier suppression effectiveness. \vspace{-1em}}
    \label{fig:block_size_activation_plots}
\end{figure*}

\begin{figure*}[t]
    \centering
    \includegraphics[width=0.95\linewidth]{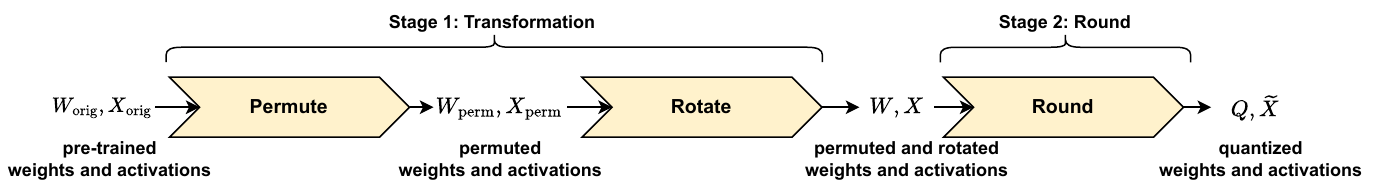}
    \caption{
    Given pre-trained weights and activations, \OurAlgo~first computes a permutation that equalizes the per-block activation mass. The permuted weights and activations are then rotated. Finally, the permuted and rotated weights and activations are then rounded {to a target alphabet} defined by a bit width, scaling factor, and zero point.\vspace{-0.3cm}}
    \label{fig:perq-pipeline}
\end{figure*}

\paragraph{Contributions.}
We introduce \OurAlgo, a new framework for calibrating permutations to improve the outlier suppression capabilities of block rotations for few-bit LLM inference.
Our work is driven by three key contributions.
\textbf{First,} we present the first non-asymptotic analysis of the outlier suppression capabilities of full-vector and block Hadamard rotations.
Prior work asymptotically analyzes expected \( \ell_2 \) error after block rotations~\cite{egiazarian2026bridging}, offering insights under Laplacian or Gaussian assumptions but no worst-case guarantees.
In contrast, our analysis yields the first deterministic bounds that provide sufficient conditions under which full-vector and block Hadamard rotations suppress outliers (Propositions~\ref{prop:prop1} and~\ref{prop:block_rotations}).
Our analysis reveals that outlier suppression is fundamentally limited by the geometry of the input vector, namely its \( \ell_1 \) norm mass distribution.
Moreover, we analyze how these limits evolve with block size, both deterministically (Corollary~\ref{cor:deterministc_evolution}) and probabilistically (Proposition~\ref{prop:probablist_evolution}).
\textbf{Second,} we propose a greedy {mass diffusion} algorithm (MassDiff) that uses calibration data to construct permutation matrices to equalize full-precision activation mass across blocks.
\textbf{Third,} we show that permutations can be merged into surrounding layers by identifying and exploiting permutation-equivariant regions in neural network architectures, yielding an end-to-end framework that improves the outlier suppression capabilities of block Hadamard rotations without introducing additional inference-time overhead.
In Section~\ref{sec:results}, we show that \OurAlgo~improves the accuracy of few-bit models compared to existing baselines, completely closing the gap to full-vector rotations with block sizes of 128 or higher.

\paragraph{Notation.} Throughout, input activation $X \in \mathbb{R}^{d}$ is a row vector, where \( d = n b \) for \( n \) blocks of \( b \) elements.
The \( j \)-th block of \( X \) is denoted \( X_{\{j\}} \in \mathbb{R}^{b} \) for \( j \in [n] \), where \( [n] = \{1, \ldots, n \} \).
\( R \in \mathbb{R}^{k \times k} \) denotes a normalized Hadamard matrix, where  \( R_i \in \mathbb{R}^k \) is the \( i \)-th column of \( R \) with \( \Vert R_i \Vert_2 = 1 \) and \( \Vert R_i \Vert_\infty = 1 / \sqrt{k} \) for \( i \in [k] \).
A {block rotation} is a block-diagonal matrix constructed via the Kronecker product
\(
\Tilde{R} = I_n \otimes R = \mathrm{diag}(R,\ldots,R),
\)
where $I_n \in \mathbb{R}^{n \times n}$ is the identity matrix.
Note that \( \Tilde{R} \in \mathbb{R}^{d \times d} \) when \( k = b \).
\( P \in \mathbb{R}^{d \times d} \) is a {permutation matrix} if there exists a permutation
\( \pi : [d] \to [d] \) such that \( P e_i = e_{\pi(i)} \) for all \( i \in [d] \).
A {quantizer} {\( \mathcal{Q}: \mathbb{R} \to \mathcal{A} \)} maps real values to a discrete \( q \)-bit alphabet \(\mathcal{A} \subset s\mathbb{Z}\), where \(s\) is the step size (or resolution) and \( \vert \mathcal{A} \vert \leq 2^q \).
We specify the quantizers used for the data formats in this work in Appendix~\ref{appendix:details}.

\section{Background and Related Work}
\label{sec:preliminaries}

Recent advances in PTQ can generally be organized into two stages of a quantization pipeline, as illustrated in Figure \ref{fig:perq-pipeline}. The first involves applying transformations to the full-precision weights or activations. The second involves mapping the resulting weights or activations to a discrete target alphabet (\textit{e.g.}, 4-bit integers), typically via error-correcting rounding algorithms.
Our work focuses on the first stage.

\begin{figure*}[t!]
    \centering
    \includegraphics[width=0.95\linewidth]{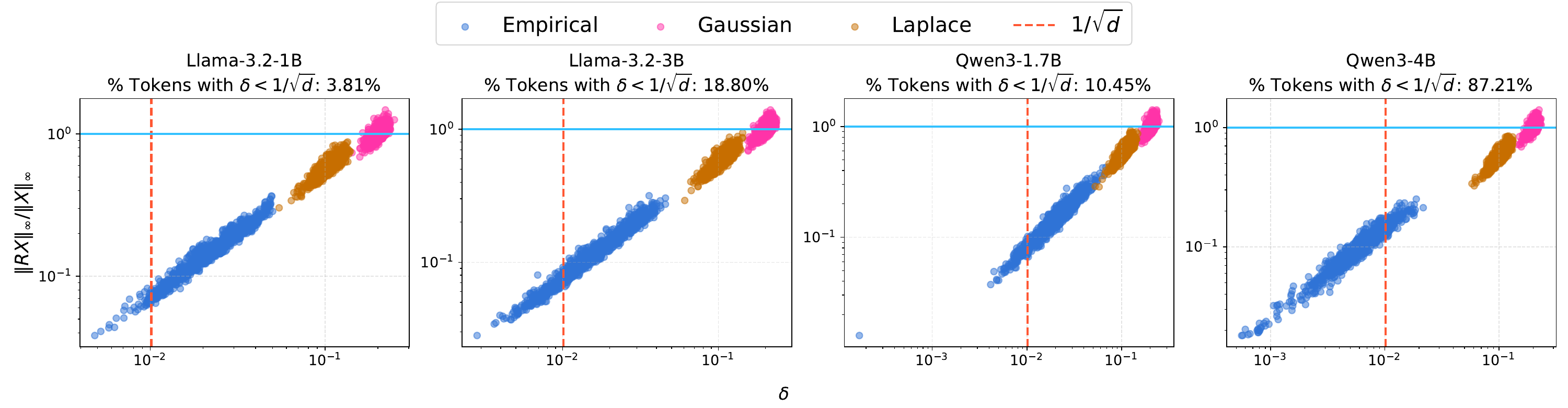}
    \caption{In \textcolor{blue}{\textbf{blue}}, we show values of mass concentration \( \delta \) and the outlier suppression ratio \( \Vert XR \Vert_\infty / \Vert X \Vert_\infty \) for 1024 tokens sampled from the third down projection layer of Llama3 and Qwen3 models, using activations from WikiText2. The reference value \( 1/\sqrt{d} \) is shown in \textcolor{red}{\textbf{red}}, below which Proposition~\ref{prop:prop1} guarantees outlier suppression. Although many {tokens} lie above this sufficient threshold, outlier suppression is observed consistently, and its degree is strongly correlated with \( \delta \). For comparison, \textcolor{magenta}{Gaussian} and \textcolor{brown}{Laplacian} distributions are first fit to the empirical activations on a per-token basis, and \( \delta \) values are then computed from samples drawn from these fitted distributions; their resulting \( \delta \) distributions differ substantially from those observed for real LLM activations, yet the correlation persists.\vspace{-0.3cm}}
    \label{fig:delta_range}
\end{figure*}

\textbf{Transformation algorithms.}
Depending on how transformations are incorporated into the compute graph, they may either be merged into model tensors as an offline preprocessing step before deployment (\emph{mergeable}), or remain explicitly in the graph to be evaluated at inference time (\emph{online}).
One line of work equalizes dynamic range with channelwise scaling prior to rounding, such as SmoothQuant \cite{xiao2023smoothquant} and AWQ \cite{lin2024awq}.
Another line of work inserts rotations into the graph prior to rounding to suppress outliers in both the weights and activations.
These rotations may be random~\cite{QuaRot}, expandable~\cite{franco2025improving}, or learned~\cite{SpinQuant}.
While mergeable rotations can be applied without impacting inference-time computation, online rotations incur non-trivial inference overhead  (see Appendix~\ref{appendix:hadamard-remark} for a full discussion).
This cost motivates the study of structured rotation schemes that trade expressivity for computational efficiency while preserving  outlier suppression benefits.

\textbf{Structured rotations.} 
Early works with online rotations adopt dense matrices~\cite{chee2023quip}, requiring \( \mathcal{O}(d^2) \) operations to rotate \(X \in \mathbb{R}^d \).
One line of work reduces this complexity by imposing algebraic structure on full-vector rotations.
For example, FlatQuant~\cite{sun2024flatquant} learns Kronecker decompositions of dense matrices, reducing the cost to \( \mathcal{O}(d \sqrt{d}) \).
Other approaches further restrict structure: QuaRot adopts random Hadamard rotations and ButterflyQuant \cite{xu2025butterflyquant} learns factorizations of Givens matrices, both admitting \( \mathcal{O}(d \log d) \) implementations.
A complementary strategy is to independently apply rotations to fixed-size partitions of \( X \), yielding block rotations.
DuQuant~\cite{lin2024duquant} composes dense block rotations with permutations, reducing memory to \( \mathcal{O}(b^2) \) for block size \( b \) while requiring \( \mathcal{O}(db)  \) operations at inference.
In practice, block Hadamard rotations have emerged as the most common structured rotation, reducing the computational cost to \( \mathcal{O}(d \log b) \). Recent concurrent works such as MR-GPTQ~\cite{egiazarian2026bridging} and BRQ~\cite{shao2025block} study block Hadamard rotations with microscaled (MX) datatypes, underscoring their practical relevance in modern PTQ pipelines.
Our work takes a complementary perspective.
Rather than proposing a new family of structured rotations or focusing on their interaction with specific numerical formats, we analyze the outlier suppression capabilities of Hadamard rotations, and use the resulting insights to design an equalization framework that maintains these capabilities under block structure constraints.

\section{Theoretical Analysis of Outlier Suppression}
\label{sec:theory}

The connection between outlier suppression and quantization error can be seen through deterministic worst-case bounds.
For integer quantizer \( \mathcal{Q} \), the step size (or scaling factor) is commonly defined as
\( s = \Vert X \Vert_\infty / (2^{q-1}-1) \)
for a \(q\)-bit alphabet~\cite{gholami2022survey}.
The worst-case quantization error for activation vector~\( X \in \mathbb{R}^d \) satisfies
\[
\Vert X - \mathcal{Q}(X) \Vert_2 \leq \frac{\sqrt{d}}{2^{q} - 2} \Vert X\Vert_\infty. \]
Indeed, the worst-case quantization error scales linearly with \( \Vert X \Vert_\infty \), so suppressing outliers (\textit{i.e.}, reducing \( \Vert X \Vert_\infty \)) directly tightens worst-case  \( \ell_2 \) error.

Using this lens, we first establish sufficient conditions under which full-vector Hadamard rotations suppress outliers (Proposition \ref{prop:prop1}).
{We then generalize our analysis to block Hadamard rotations (Proposition \ref{prop:block_rotations}), and characterize how {worst-case post-rotation outliers evolve} with block size both deterministically (Corollary~\ref{cor:deterministc_evolution}) and probabilistically (Proposition~\ref{prop:probablist_evolution}).}
All proofs are provided in Appendix~\ref{appendix:proofs}.

\begin{figure*}[t!]
    \centering
    \includegraphics[width=0.95\linewidth]{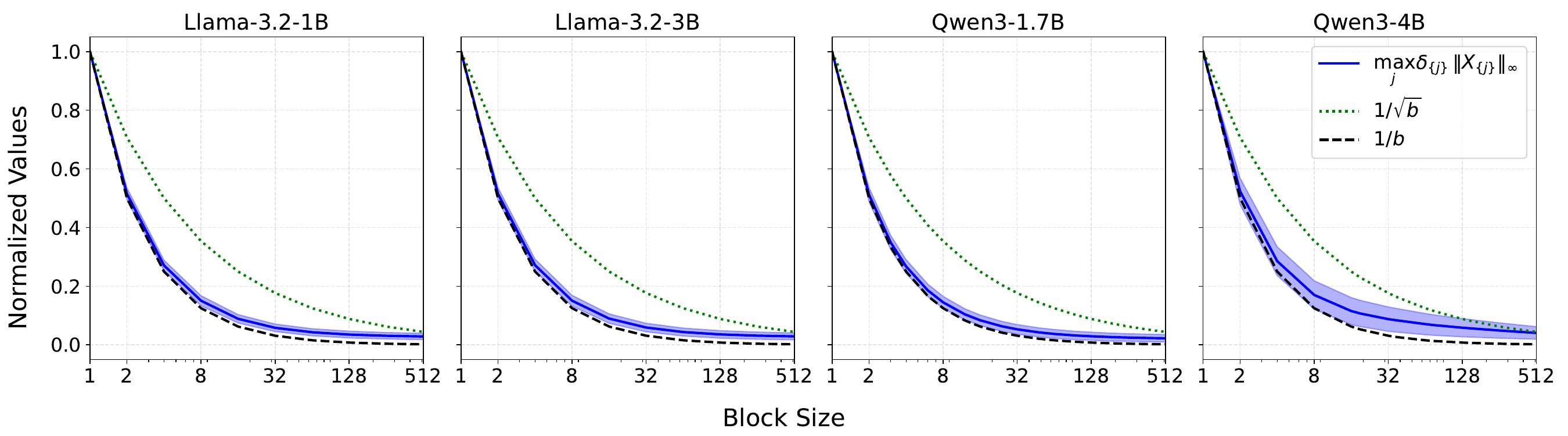}
    \caption{In \textcolor{blue}{\textbf{blue}}, we plot, for each block size \( b \), the empirical mean and standard deviation of \( \max_j \delta_{\{j\}} \Vert X_{\{j\}} \Vert_\infty \) normalized by \( \Vert X \Vert_\infty \). Following Proposition~\ref{prop:block_rotations}, we show the sufficient threshold for outlier suppression \( 1/\sqrt{b} \) in \textcolor{ForestGreen}{\textbf{green}}, and the theoretical lower bound \( 1/b \) in \textbf{black}. Values are computed over all down projection layers using 10K tokens sampled from the WikiText2 dataset.\vspace{-0.3cm}}
    \label{fig:delta_block_size}
\end{figure*}

\begin{figure*}[t!]
    \centering
    \includegraphics[width=0.98\linewidth]{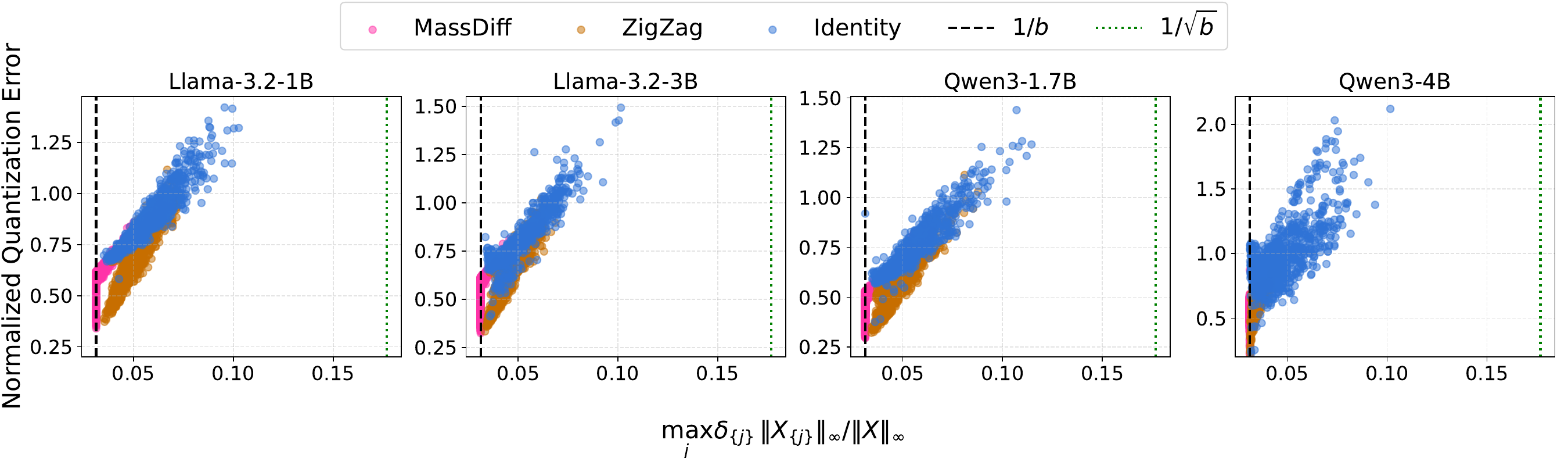}
    \caption{For each token in Figure~\ref{fig:delta_range}, we compute (1) \( \max_j \delta_{\{j\}} \Vert X_{\{j\}} \Vert_\infty \), which is the bound from Proposition~\ref{prop:block_rotations} divided by a constant \( \sqrt{b} \), and (2) the actual quantization error, defined as \( \Vert XP\Tilde{R} - Q(XP\Tilde{R}) \Vert_F \), after 4-bit quantization with block size \( b=32 \) (both are normalized by \( \Vert X \Vert_\infty \) per token). Consistent with Figure~\ref{fig:delta_block_size}, we show the sufficient threshold for outlier suppression \( 1/\sqrt{b} \) in \textcolor{ForestGreen}{\textbf{green}}, and the theoretical lower bound \( 1/b \) in \textbf{black}. In \textcolor{blue}{\textbf{blue}}, we show the values without a permutation (\textit{i.e.}, Identity \( P=I \)). We then compare \textcolor{magenta}{\textbf{MassDiff}} (Algorithm~\ref{alg:massdiff}) and \textcolor{brown}{\textbf{ZigZag}}~\cite{lin2024duquant}, where permutations are calculated per-token. The resulting scatter plot reveals that our normalized bound closely tracks relative per-token quantization error. Moreover, {MassDiff} reduces the bound for 100\% of tokens across all four models, with mean quantization error reductions of 37.5--40.5\%. In contrast, ZigZag only reduces the bound on 82--90\% of tokens with 21--36\% error reduction, providing strong evidence that our theory identifies the correct bottleneck: mass concentration.\vspace{-0.3cm}}
    \label{fig:correlation_quant_error_bound}
\end{figure*}

\subsection{Full-Vector Hadamard Rotations}
\label{sec:full-vector-theory}

We first offer a {deterministic and} non-asymptotic analysis of when full-vector Hadamard rotations suppress outliers in activation vector \( X \) with the following proposition.
\begin{restatable}[When Full-Vector Hadamard Rotations Suppress Outliers]{proposition}{propOne}
Let $X \in \mathbb{R}^d$ be an activation vector, and let $R \in \mathbb{R}^{d \times d}$ be a normalized Hadamard matrix. Define \( \delta = \Vert X \Vert_1 / \left(d \Vert X \Vert_\infty\right) \), then the rotation of \( X \) by \( R \) satisfies
\begin{equation}
    \Vert XR\Vert_\infty \leq \delta \sqrt{d}\Vert X \Vert_\infty.
    \label{eq:full-vector-bound}
\end{equation}
\label{prop:prop1}
\end{restatable}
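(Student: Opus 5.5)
The plan is to bound each coordinate of \( XR \) separately and observe that the right-hand side of \eqref{eq:full-vector-bound} is just a rewriting of \( \Vert X \Vert_1 / \sqrt{d} \). Substituting the definition \( \delta = \Vert X \Vert_1 / (d \Vert X \Vert_\infty) \) gives
\[
\delta \sqrt{d} \Vert X \Vert_\infty = \frac{\Vert X \Vert_1}{\sqrt{d}}.
\]
So it suffices to show \( \Vert XR \Vert_\infty \leq \Vert X \Vert_1 / \sqrt{d} \). We may assume \( X \neq 0 \), so that \( \delta \) is well defined; the case \( X = 0 \) is trivial.

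For each \( i \in [d] \), the \( i \)-th entry of the row vector \( XR \) is the inner product \( \langle X, R_i \rangle \) with the \( i \)-th column of \( R \). Hölder's inequality with the dual pair \( (\ell_1, \ell_\infty) \) gives
\[
\vert \langle X, R_i \rangle \vert \leq \Vert X \Vert_1 \Vert R_i \Vert_\infty = \frac{\Vert X \Vert_1}{\sqrt{d}}.
\]
Here we used the stated property of a normalized Hadamard matrix that every entry has magnitude \( 1/\sqrt{d} \), so \( \Vert R_i \Vert_\infty = 1/\sqrt{d} \). Taking the maximum over \( i \in [d] \) yields the claim.

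There is no substantive obstacle; the argument is a one-line application of Hölder's inequality. The only point needing care is to pair \( \Vert X \Vert_1 \) with \( \Vert R_i \Vert_\infty \), rather than using Cauchy--Schwarz, which would only give the weaker bound \( \Vert X \Vert_2 \). It is also worth remarking, as context for the threshold in Figure~\ref{fig:delta_range}, that \( 1/d \leq \delta \leq 1 \). Hence the bound certifies outlier suppression, \( \Vert XR \Vert_\infty < \Vert X \Vert_\infty \), whenever \( \delta < 1/\sqrt{d} \).
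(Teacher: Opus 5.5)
Your proposal is correct and follows essentially the same argument as the paper. Both apply H\"older's inequality with the $(\ell_1,\ell_\infty)$ pairing to each column $R_i$, use $\Vert R_i\Vert_\infty = 1/\sqrt{d}$, identify $\Vert X\Vert_1/\sqrt{d}$ with $\delta\sqrt{d}\,\Vert X\Vert_\infty$, and take the maximum over $i$; your aside that Cauchy--Schwarz would only give the weaker $\Vert X\Vert_2$ bound matches the paper's own remark on energy concentration.
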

\vspace{-0.6cm}
Intuitively, the degree of outlier suppression varies with {the geometry of \( X \) as captured by \( \delta \),} which can be interpreted as the concentration of mass in~\( X \).
Since \(\Vert X \Vert_\infty \leq \Vert X \Vert_1 \leq d\Vert X \Vert_\infty\), we have \( \delta \in  [ d^{-1} , 1]\). 
Larger values of \(\delta\) imply near-uniform magnitude, whereas smaller values of $\delta$ imply mass concentration in a few large outliers.

Our bound also yields a clear guarantee: \(\Vert XR\Vert_\infty < \Vert X \Vert_\infty\) when \(\delta < 1 / \sqrt{d}\).
Importantly, this condition is sufficient but not necessary; outlier suppression can and does occur in practice for {activation vectors} with~\( \delta \) well above this threshold.
Figure~\ref{fig:delta_range} visualizes the empirical behavior of~\( \delta \) for Llama3 and Qwen3 models, showing that outliers are consistently suppressed across all sampled {tokens} even when the sufficient condition is not satisfied.
Moreover, Figure~\ref{fig:delta_range} provides two additional insights.
First, the empirical distribution of \( \delta \) for real LLM activations differs markedly from those implied by common distributional assumptions.
To ensure a fair comparison, Gaussian and Laplacian models are fit to the empirical activations on a per-token basis, and \( \delta \) values are then computed from samples drawn from these fitted distributions.
The resulting distributions fail to capture the empirical distribution of \( \delta \) observed in real models, highlighting the limits of analyses based on such assumptions.
Second, \( \delta \) is strongly correlated with the outlier suppression ratio \( \Vert XR \Vert_\infty / \Vert X \Vert_\infty \), indicating that mass concentration serves as an effective proxy for predicting the degree of outlier suppression achieved by rotation.

\subsection{Block Hadamard Rotations}
\label{sec:block-theory}

Restricting rotations to operate on fixed-size partitions of \( X \) fundamentally changes how mass is redistributed across coordinates, as formalized by the following proposition.

\begin{restatable}[When Block Hadamard Rotations Suppress Outliers]{proposition}{propTwo}
Let $\Tilde{R} = I_n \otimes R $ be a block rotation, where \( R \in \mathbb{R}^{b \times b} \) is a normalized Hadamard matrix and \( \Tilde{R} \in \mathbb{R}^{d \times d}\) with \( d = nb \). Let $X_{\{j\}} \in \mathbb{R}^b$ be the $j$-th block of $X \in \mathbb{R}^d$ for \( j \in [n] \).  Define \( \delta_{\{j\}} = \Vert X_{\{j\}} \Vert_1 / \left( b \Vert X_{\{j\}} \Vert_\infty \right) \), then the rotation of $X$ by $\Tilde{R}$ satisfies
\begin{equation}
    \Vert X \Tilde{R} \Vert_\infty \leq \max_{j \in [n]} \;   \delta_{\{j\}} \sqrt{b} \, \Vert X_{\{j\}} \Vert_\infty.
    \label{eq:block-bound}
\end{equation}
\label{prop:block_rotations}
\end{restatable}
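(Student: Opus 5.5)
The plan is to reduce the statement to Proposition~\ref{prop:prop1}, applied block by block. The key structural fact is that $\Tilde{R} = I_n \otimes R$ is block diagonal. For a row vector $X = (X_{\{1\}}, \ldots, X_{\{n\}})$, the product $X\Tilde{R}$ therefore splits into the concatenation $(X_{\{1\}}R, \ldots, X_{\{n\}}R)$. Each output block depends only on the matching input block, because the off-diagonal blocks of $\Tilde{R}$ are zero. Consequently
\[
\Vert X\Tilde{R}\Vert_\infty = \max_{j\in[n]} \Vert X_{\{j\}} R\Vert_\infty ,
\]
since the $\ell_\infty$ norm of a concatenation is the maximum of the $\ell_\infty$ norms of its pieces.

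Next, fix a block $j$. Proposition~\ref{prop:prop1} applies with dimension $b$ in place of $d$, to the vector $X_{\{j\}}\in\mathbb{R}^b$ and the normalized Hadamard matrix $R\in\mathbb{R}^{b\times b}$. Its $\delta$ becomes exactly $\delta_{\{j\}} = \Vert X_{\{j\}}\Vert_1/(b\Vert X_{\{j\}}\Vert_\infty)$, which gives $\Vert X_{\{j\}}R\Vert_\infty \le \delta_{\{j\}}\sqrt{b}\,\Vert X_{\{j\}}\Vert_\infty$. Equivalently, one can verify this directly from the column properties of $R$: each entry of the product is $X_{\{j\}}R_i$, and Hölder's inequality gives $|X_{\{j\}}R_i| \le \Vert X_{\{j\}}\Vert_1 \Vert R_i\Vert_\infty = \Vert X_{\{j\}}\Vert_1/\sqrt{b}$. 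Rewriting $\Vert X_{\{j\}}\Vert_1/\sqrt{b}$ as $\delta_{\{j\}}\sqrt{b}\,\Vert X_{\{j\}}\Vert_\infty$ recovers the same bound. Taking the maximum over $j$ then yields \eqref{eq:block-bound}.

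The only real subtlety is degenerate blocks with $X_{\{j\}}=0$, where $\delta_{\{j\}}$ is undefined. I would handle this by adopting the convention that such a block contributes $0$ to the maximum. This is consistent because $X_{\{j\}}R=0$ for such a block, and because the product $\delta_{\{j\}}\Vert X_{\{j\}}\Vert_\infty = \Vert X_{\{j\}}\Vert_1/b$ is well-defined and equal to $0$ regardless. Apart from this, the argument is routine, and the main care needed is stating the block decomposition of the product cleanly.
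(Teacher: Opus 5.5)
Your proposal is correct and follows the paper's proof essentially verbatim: the block-diagonal structure of $\Tilde{R}$ gives $\Vert X\Tilde{R}\Vert_\infty = \max_j \Vert X_{\{j\}}R\Vert_\infty$, and H\"older's inequality with $\Vert R_i\Vert_\infty = 1/\sqrt{b}$ then bounds each block. Your convention for zero blocks, where $\delta_{\{j\}}$ is undefined, is a small point the paper leaves implicit.
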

\vspace{-0.6cm}
{Proposition \ref{prop:block_rotations} provides a principled characterization of when block Hadamard rotations suppress outliers.
Importantly, it strictly subsumes Proposition \ref{prop:prop1}; Equation~\ref{eq:block-bound} reduces exactly to Equation~\ref{eq:full-vector-bound} as \( b \rightarrow d \).
Moreover, since \( \delta_{\{j\}} \Vert X_{\{j\}} \Vert_\infty = \Vert X_{\{j\}} \Vert_1 / b \), outlier suppression in the block setting is fundamentally limited by the local geometry of the activation vector, namely the block with the largest \( \ell_1 \) norm mass.
This dependence is captured by the per-block ratio \( \delta_{\{j\}} \in [b^{-1}, 1] \), which plays the same role as \( \delta \) in the full-vector case but is now tied to both the block size \( b \) and the local geometry within each block.
For fixed block size \( b \), larger values of \( \delta_{\{j\}} \) again correspond to blocks whose coordinates are more evenly distributed.
Therefore, the bound in Equation~\ref{eq:block-bound} is dominated by blocks that exhibit both (1) high intra-block uniformity (large \( \delta_{\{j\}} \)) and (2) large \( \Vert X_{\{j\}}\Vert_\infty \) relative to \( \Vert X \Vert_\infty \).
For fixed activation vector \( X \), one can reason mechanistically about how outlier suppression evolves with \( b \) via the following corollary.}
\begin{restatable}[Deterministic Evolution of Post-Rotation Outliers]{corollary}{propThree}
{Let \( X \in \mathbb{R}^d \) be an activation vector and let \( X_{\{j\}} \in \mathbb{R}^b \) be the \( j \)-th block of \( X \) for \( j \in [n] \) with \( d = nb \). Define \( \mathcal{Z}(b;X) = \max_{j \in [n]} \sqrt{b} \; \delta_{\{j\}} \Vert X_{\{j\}} \Vert_\infty \), where \( \delta_{\{j\}} = \Vert X_{\{j\}} \Vert_1 / \left( b \Vert X_{\{j\}} \Vert_\infty \right) \).
Then, for positive integers \(k,b' \in \mathbb{N}\) such that \( b = kb'\), it is verified that}
\[
\mathcal{Z}(b;X) \leq \sqrt{k} \; \mathcal{Z}(b';X).
\]
\label{cor:deterministc_evolution}
\end{restatable}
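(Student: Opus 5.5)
The plan is to rewrite \( \mathcal{Z} \) in terms of blockwise \( \ell_1 \) norms and then compare block sizes through the nesting of the partitions. Since \( \delta_{\{j\}} \Vert X_{\{j\}} \Vert_\infty = \Vert X_{\{j\}} \Vert_1 / b \), as already noted after Proposition~\ref{prop:block_rotations}, we have
\[
\mathcal{Z}(b;X) = \frac{1}{\sqrt{b}} \max_{j \in [n]} \Vert X_{\{j\}} \Vert_1 .
\]
This identity removes \( \delta_{\{j\}} \) from the problem and reduces the corollary to a comparison of maximal block \( \ell_1 \) masses at the two block sizes. If some block is identically zero, \( \delta_{\{j\}} \) is undefined, but we interpret its product with \( \Vert X_{\{j\}} \Vert_\infty \) as \( 0 \), which is consistent with the identity.

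Next we use the nesting of partitions. Because \( b = k b' \) and blocks are contiguous, each block \( X_{\{j\}} \) of size \( b \) is the concatenation of exactly \( k \) consecutive blocks of size \( b' \). Call these \( X'_{\{(j-1)k+1\}}, \ldots, X'_{\{jk\}} \), indexed in the finer partition with \( n' = d/b' = nk \) blocks. The \( \ell_1 \) norm is additive over disjoint coordinate sets, so
\[
\Vert X_{\{j\}} \Vert_1 = \sum_{i=1}^{k} \Vert X'_{\{(j-1)k+i\}} \Vert_1 \leq k \max_{l \in [n']} \Vert X'_{\{l\}} \Vert_1 = k \sqrt{b'} \, \mathcal{Z}(b';X).
\]

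Finally, we take the maximum over \( j \) and divide by \( \sqrt{b} = \sqrt{k}\sqrt{b'} \):
\[
\mathcal{Z}(b;X) \leq \frac{k\sqrt{b'}}{\sqrt{k}\sqrt{b'}} \, \mathcal{Z}(b';X) = \sqrt{k}\, \mathcal{Z}(b';X).
\]
No step is a real obstacle. The only points needing care are the zero-block convention for \( \delta_{\{j\}} \) and the fact that a block of size \( b \) must be a union of blocks of size \( b' \); the latter holds because both partitions are contiguous and \( b' \) divides \( b \).
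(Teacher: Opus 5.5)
Your proposal is correct and follows essentially the same route as the paper's proof: rewrite $\mathcal{Z}(b;X)$ as $\max_j \Vert X_{\{j\}}\Vert_1/\sqrt{b}$, split each size-$b$ block into $k$ contiguous size-$b'$ sub-blocks, bound the sum of their $\ell_1$ norms by $k$ times the maximum, and divide by $\sqrt{kb'}$. Your explicit convention for identically zero blocks, where $\delta_{\{j\}}$ is undefined, is a small rigor improvement that the paper's proof leaves implicit.
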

\vspace{-0.8cm}
{Given \( X \) and atomic block size \( b' \), increasing the block size to \( b = kb' \) causes the worst-case post-rotation outlier to grow by a factor of \( \sqrt{k} \).
This arises from the inherent pessimism of worst‑case analysis: larger \(b\) increases the number of terms contributing to each rotated coordinate, and these terms can, in the worst case, align constructively with the Hadamard matrix. Although this deterministic view is informative, it does not directly model expected behavior, since the interaction between the block structure of \(\Tilde{R}\) and the local geometry of \(X\) is non‑trivial.}

\begin{algorithm*}[t!]
\caption{Mass Diffusion (MassDiff) Algorithm}
\label{alg:massdiff}
\begin{algorithmic}
\REQUIRE Input activations from calibration dataset \( \mathcal{D} = \{ X^{(1)}, \ldots, X^{(m)}\} \), block size \( b \), number of blocks \( n = d / b\)
\STATE \( \mathcal{J} \leftarrow \{1, \ldots, n \}\) \hfill \COMMENT{Initialize valid blocks}
\STATE \( \mathcal{B}_j \leftarrow  \emptyset \; \) for all \( j \in  \mathcal{J} \) \hfill \COMMENT{Initialize per-block index sets}
\STATE \( \mathcal{I}  \leftarrow \textrm{argsort}_i( \frac{1}{m} \sum_{k=1}^m \vert X_i^{(k)} \vert ) \) \hfill\COMMENT{Sort coordinates by descending average magnitude}
\FOR{\( i \in \mathcal{I} \)}
\STATE \( j^* = \argmin_{j \in \mathcal{J}} \; \frac{1}{m} \sum_{k=1}^m \Vert X^{(k)}_{\mathcal{B}_j} \Vert_1 + \vert X^{(k)}_i\vert  \) \hfill\COMMENT{Assign \( i \) to minimize average maximum per-block \( \ell_1 \) norm}
\STATE \( \mathcal{B}_{j^*} \leftarrow \mathcal{B}_{j^*} \cup \{ i \} \) \hfill\COMMENT{Add \( i \) to the selected block}
\IF{ \( | \mathcal{B}_{j^*}| = b \; \)}
\STATE \( \mathcal{J} \leftarrow \mathcal{J} \setminus \{ j^*\} \) \hfill\COMMENT{Drop \( j^* \) once the block is full}
\ENDIF
\ENDFOR
\STATE \textbf{Return} \( [ \mathcal{B}_1, \ldots, \mathcal{B}_n ] \) \hfill\COMMENT{Concatenate blocks into the permutation}
\end{algorithmic}
\end{algorithm*}

{To better understand the expected behavior of Equation~\ref{eq:block-bound}, we visualize the empirical relationship between \( \max_j \delta_{\{j\}} \Vert X_{\{j\}} \Vert_\infty / \Vert X \Vert_\infty \) and block size~\( b \) in Figure~\ref{fig:delta_block_size}.
By definition, \( \delta_{\{j\}} \ge b^{-1} \), so this quantity is lower bounded by \( 1/b \).
Moreover, Proposition~\ref{prop:block_rotations} also yields a sufficient condition for guaranteed outlier suppression: \( \Vert X \Tilde{R} \Vert_\infty < \Vert X \Vert_\infty \) when \( \max_j \delta_{\{j\}} \Vert X_{\{j\}} \Vert_\infty / \Vert X \Vert_\infty < 1 / \sqrt{b} \).
Across down projection layers in Llama3 and Qwen3 models, the observed values of \( \max_j \delta_{\{j\}} \Vert X_{\{j\}} \Vert_\infty / \Vert X \Vert_\infty \) fall well-below this threshold for a wide range of block sizes.
Thus, Proposition~\ref{prop:block_rotations} predicts outlier suppression in this regime.
Moreover, the normalized bound closely tracks relative per-token quantization error in practice (Figure~\ref{fig:correlation_quant_error_bound}), serving as a concrete target for the algorithm design discussed in Section~\ref{sec:perq}.
To complement this observation analytically, we provide a probabilistic analysis of how post-rotation outliers evolve with block size via the following proposition.}
\begin{restatable}[Probabilistic Evolution of Post-Rotation Outliers]{proposition}{propFour}
{Let \( \Tilde{R} = I_n \otimes R \) be a block rotation, where \( R \in \mathbb{R}^{ b \times b } \) is a normalized Hadamard matrix and \( \Tilde{R} \in \mathbb{R}^{d \times d} \) with \( d = nb \).
Given \(Y = (Y_1, \ldots, Y_d) \in \mathbb{R}^d \), let \( S = (S_1, \ldots, S_d) \) be a random vector with i.i.d. Rademacher entries \(S_i \sim \mathrm{Rad}(\pm1) \). Define \(X \in \mathbb{R}^d\) coordinate-wise as \( X_i = S_i Y_i \, \) for \( i \in [d] \). Then, conditional on \( Y \),
\begin{equation}
\Vert X\Tilde{R} \Vert_\infty
\leq \sqrt{\frac{2}{b} \log \Big( \frac{2d}{\varepsilon} \Big) \Vert X\Vert^2_2}
\label{eq:prob_bound}
\end{equation}
with probability at least \( 1 - \varepsilon \).}
\label{prop:probablist_evolution}
\end{restatable}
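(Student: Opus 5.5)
The plan is a Hoeffding-type concentration bound for each coordinate of \( X\Tilde{R} \), followed by a union bound over all \( d \) coordinates. The key observation is that, conditional on \( Y \), the norm \( \Vert X \Vert_2 \) is deterministic: \( |X_i| = |S_i||Y_i| = |Y_i| \), so \( \Vert X\Vert_2 = \Vert Y \Vert_2 \). All the randomness therefore sits in the signs \( S_i \), and the right-hand side of Equation~\ref{eq:prob_bound} is a fixed threshold.

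\textbf{Step 1 (coordinate structure).} Fix an output coordinate \( c \in [d] \). Because \( \Tilde{R} = I_n \otimes R \) is block-diagonal, \( c \) lies in some block \( j \), and we can write \( c = (j-1)b + c' \). Then
\[
(X\Tilde{R})_c = \sum_{i=1}^{b} S_{(j-1)b+i}\, Y_{(j-1)b+i}\, R_{i c'} .
\]
This is a Rademacher sum \( \sum_i a_i S_{(j-1)b+i} \) with coefficients \( a_i = Y_{(j-1)b+i} R_{ic'} \), which are fixed given \( Y \). Since every entry of a normalized Hadamard matrix has magnitude \( 1/\sqrt{b} \),
\[
\sum_i a_i^2 = \Vert Y_{\{j\}}\Vert_2^2 / b \le \Vert Y \Vert_2^2 / b = \Vert X\Vert_2^2 / b .
\]

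\textbf{Step 2 (Hoeffding).} The \( S_i \) are i.i.d., symmetric, and bounded in \( [-1,1] \), so Hoeffding's inequality (equivalently, the sub-Gaussian tail for Rademacher sums) gives
\[
\Pr\big(|(X\Tilde{R})_c| > t \mid Y\big) \le 2\exp\!\Big(-\frac{t^2}{2\sum_i a_i^2}\Big) \le 2\exp\!\Big(-\frac{b\,t^2}{2\Vert X\Vert_2^2}\Big).
\]

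\textbf{Step 3 (union bound).} Taking a union over the \( d \) coordinates gives
\[
\Pr\big(\Vert X\Tilde{R}\Vert_\infty > t \mid Y\big) \le 2d \exp\!\big(-b t^2 / (2\Vert X\Vert_2^2)\big).
\]
Setting the right-hand side equal to \( \varepsilon \) and solving yields
\[
t^2 = \frac{2}{b}\log\!\Big(\frac{2d}{\varepsilon}\Big)\Vert X\Vert_2^2,
\]
which is exactly Equation~\ref{eq:prob_bound}, holding with probability at least \( 1-\varepsilon \).

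\textbf{Main obstacle.} There is no real technical difficulty here. The one point needing care is the conditioning: one must check that \( \Vert X\Vert_2 \) does not depend on \( S \), so that it may legitimately appear in a deterministic threshold. It is also worth noting where slack enters. Replacing \( \Vert Y_{\{j\}}\Vert_2 \) by \( \Vert Y\Vert_2 \) in Step 1 is crude, but it is precisely what produces the stated global form, with its explicit \( 1/\sqrt{b} \) dependence on the block size.
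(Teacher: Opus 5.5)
Your proposal is correct and follows essentially the same route as the paper. Both arguments apply a per-coordinate sub-Gaussian (Hoeffding) tail to the Rademacher sum with variance proxy $\Vert Y_{\{j\}}\Vert_2^2/b$, take a union bound over the $d$ coordinates, and solve $2d\exp(-b\tau^2/(2\,\cdot\,)) = \varepsilon$ for the threshold. The only cosmetic difference is that the paper carries $\max_{j}\Vert X_{\{j\}}\Vert_2^2$ through the union bound and relaxes it to $\Vert X\Vert_2^2$ only at the end (keeping the sharper block-wise form used in Remark~\ref{remark:l1-l2-probabilistic}), whereas you relax it already in Step 1.
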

From Equation~\ref{eq:prob_bound}, we can reason about the expected behavior of post-rotation outliers beyond the adversarial worst case.
Under mild assumptions, which we test in Appendix~\ref{appendix:proof_prop3}, the expected worst-case post-rotation outlier decreases with block size \( b \) with high probability.
This reveals an explicit trade-off: larger blocks decrease maximum post-rotation outliers with high probability but increase minimum computational requirements (Appendix~\ref{appendix:hadamard-remark}).
{Moreover, Proposition~\ref{prop:probablist_evolution} offers a complementary geometric perspective.
In particular, outlier suppression is also probabilistically limited by mass concentration (see Remark~\ref{remark:l1-l2-probabilistic}).}

\section{\OurAlgo}
\label{sec:perq}

{Both our deterministic and probabilistic bounds make clear that the worst-case outlier, and therefore the worst-case quantization error, is governed by blocks with the largest mass.}
In particular, the bound in Equation \ref{eq:block-bound} implies
\[
\Vert X \Tilde{R} \Vert_\infty \leq \max_j \delta_{\{j\}} \sqrt{b} \Vert X_{\{j\}} \Vert_\infty \propto \max_j \Vert X_{\{j\}} \Vert_1,
\]
which follows from the definition \( \delta_{\{j\}} = \Vert X_{\{j\}} \Vert_1 / (b \Vert X_{\{j\}} \Vert_\infty) \).
Thus, for a fixed block size~\( b \), the tightest bound is achieved by minimizing
\(
\max_j \Vert X_{\{j\}} \Vert_1
\),
which corresponds to balancing the per-block \( \ell_1 \) norms.
Intuitively, {for a fixed \( X \)}, block rotations are most effective when large-magnitude coordinates are distributed evenly across blocks rather than concentrated within a subset of them.
Empirically, we observe that the bound from Proposition~\ref{prop:block_rotations} closely tracks actual per-token quantization error (Figure~\ref{fig:correlation_quant_error_bound}), extending Figures~\ref{fig:delta_range} and~\ref{fig:delta_block_size} from outlier suppression to quantization error itself.
Together, this motivates our core design principle: permute activation coordinates to minimize the maximum per-block \(\ell_1\) norm before applying block rotations.

\paragraph{Overview.}
\OurAlgo~operationalizes the above design principle via the pipeline in Figure~\ref{fig:perq-pipeline}: given pre-trained weights and activations, we first compute a permutation that redistributes activation mass, then apply rotations, and finally round, optionally with error correction algorithms such as GPTQ (also known as OPTQ)~\cite{frantar2023optq}.
Importantly, equalization is performed once on the full-precision model, and the resulting permutations are fused into surrounding layers before deployment by exploiting permutation equivariance within the compute graph (Definition~\ref{def:permutation-equivariant-region}).

\begin{figure*}[t!]
    \centering
    \includegraphics[width=0.95\linewidth]{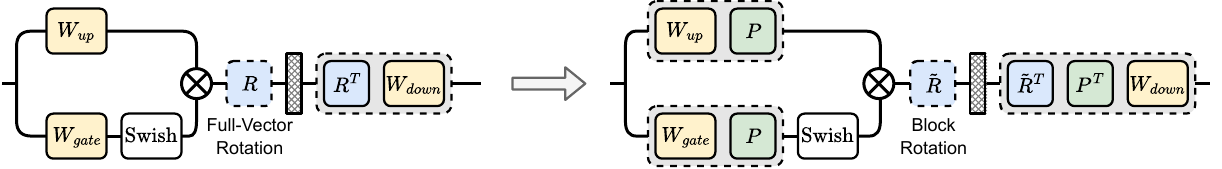}
    \caption{\OurAlgo~replaces full-vector rotation \( R \) with permutation \( P \) and block rotation \( \Tilde{R} \), as shown here for the feedforward network in the standard transformer architecture.
    Importantly, this subgraph is a permutation-equivariant region (Definition~\ref{def:permutation-equivariant-region}) but it is not rotation-equivariant due to the Swish activation function.
    So, \( P \) is merged into the surrounding weights, while \( R \) and \( \Tilde{R} \) remain online.\vspace{-0.3cm}}
    \label{fig:permutation-equivariance}
\end{figure*}

\paragraph{Mass Diffusion (MassDiff).}
To balance activation mass across blocks, we propose MassDiff, a greedy permutation algorithm that directly targets the expected value of the maximum per-block \( \ell_1 \) norm over a calibration dataset (pseudocode in Algorithm~\ref{alg:massdiff}).
Figure~\ref{fig:correlation_quant_error_bound} shows that this balancing directly translates to lower quantization error: MassDiff drives 77.2--100\% of tokens to within 1\% of the~theoretical\\
limit across our four models, reducing actual per-token quantization error by 37.5--40.5\% on average over the identity permutation.
By contrast, an alternative strategy is to assign coordinates to blocks in a zigzag pattern, as studied by~\citet{lin2024duquant} to reduce variance.
This proxy only inadvertently tightens our bound, reaching the theoretical limit on 0.0--0.8\% of tokens yet still reducing error by 21--36\%.
This reinforces our analysis on two fronts: (1) per-block \( \ell_1 \) balance is the right proxy, and (2) tightening the bound, even partially, translates to lower quantization error.
This link carries through end-to-end: MassDiff improves both perplexity and zero-shot accuracy (Section~\ref{sec:results}), with further ablations across alternative permutation strategies and calibration scales in Appendix~\ref{appendix:ablations}.

\begin{figure*}[t!]
    \centering
    \includegraphics[width=\linewidth]{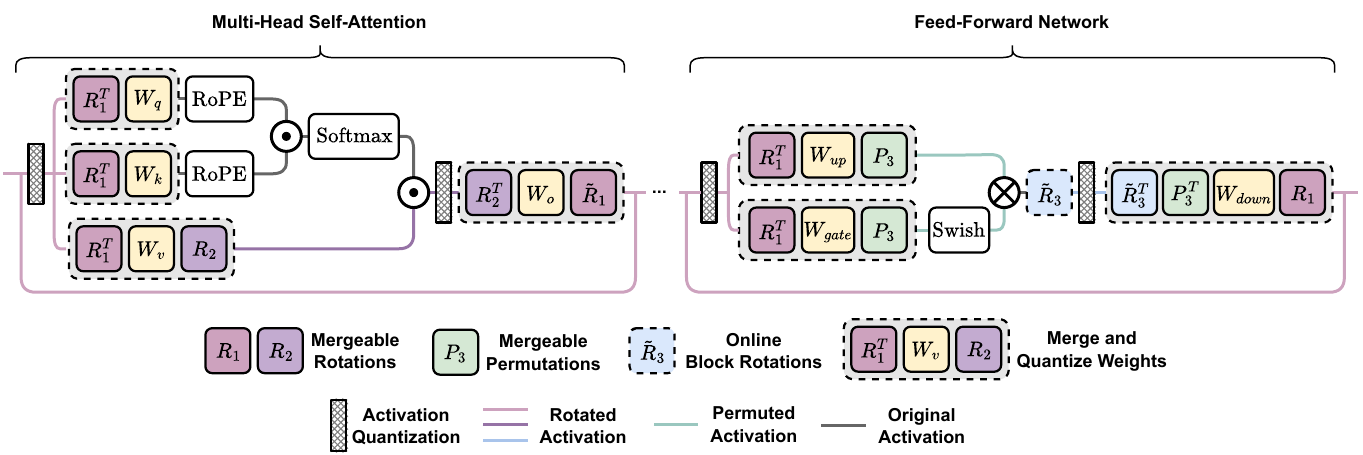}
    \caption{We illustrate our quantization graph architecture for a standard transformer block, merging rotations and permutations wherever possible and quantizing the weights and activations for all linear layers.}
    \label{fig:perq_architecture}
\end{figure*}

\paragraph{Permutation Equivariance in Neural Networks.}
At first glance, introducing permutations may appear to alter the network computation. However, permutations can be introduced to preserve functional behavior without introducing overhead.
We first define permutation-equivariant regions.

\begin{definition}[Permutation-Equivariant Region]
\label{def:permutation-equivariant-region}
{
A permutation-equivariant region is a contiguous subgraph of a neural network of the form \( \Phi : = f_n \circ \dots \circ f_1 \) whose map \( \Phi : (\mathbb{R}^d)^m \to (\mathbb{R}^d)^{m'} \) satisfies the following property: for any activation vectors \( X^{(1)}, \ldots, X^{(m)} \in \mathbb{R}^d \) and any permutation matrix \(P \in \mathbb{R}^{d \times d}\) acting on the feature dimension, if
\(
\Phi(X^{(1)}, \ldots, X^{(m)}  ) = (Y^{(1)}, \ldots, Y^{(m')})
\)
then
\(
\Phi(X^{(1)}P, \ldots X^{(m)}P) \;=\; (Y^{(1)} P, \ldots, Y^{(m')}P).
\)}
\end{definition}
\vspace{-0.5em}
{Note that elementwise operations such as Swish or ReLU activation functions are equivariant to permutations along the feature dimension.
Permutation-equivariant regions are therefore prevalent in neural network architectures, enabling one to introduce permutations without modifying the execution graph, as formalized in the following remark.}

\begin{remark}[Merging Permutations]
{Within permutation-equivariant regions, one can freely commute permutations through the corresponding subgraph to be absorbed into surrounding linear layers.
For example, consider activation \( X \in \mathbb{R}^d \), permutation matrix \(P \in \mathbb{R}^{d \times d}\), and weights \( W_1, W_2 \in \mathbb{R}^{d \times d} \).
Let permutation-equivariant region \( \Phi : \mathbb{R}^d \to \mathbb{R}^d  \) be unary (\textit{i.e.}, \( m = m' =1\) ). Then,
\[
\Phi(XW_1)W_2 \;=\; \Phi(XW_1P)\,P^{T} W_2 \;=\; \Phi(X\Tilde{W}_1)\Tilde{W}_2,
\]
where \( \Tilde{W}_1 = W_1P \) and \( \Tilde{W}_2 = P^{T} W_2 \).
Therefore, for \(g_i(X) = X W_i\), regions of the form \( \Psi := g_2 \circ \Phi \circ g_1 \) are functionally equivalent under permutations of the feature dimension.
This property enables \OurAlgo~to equalize activation mass across blocks with permutations that can be merged into surrounding weights before deployment, therefore leaving the graph unaltered and incurring no additional inference-time overhead.
Figure~\ref{fig:permutation-equivariance} illustrates this process for the feedforward network of a standard transformer block, where the Swish activation function and elementwise multiplication form a multi-input, single-output region within which permutation matrix \( P \) can be commuted.}
\end{remark}

\section{Experimental Results}
\label{sec:results}

The core contribution of this work is \OurAlgo, our principled framework for improving the outlier suppression capabilities of block rotations by equalizing activation mass across blocks within permutation-equivariant regions of a compute graph.
We therefore design our experiments to isolate the impact of permutations on preserving the accuracy of block-rotated LLMs quantized to narrow data formats, and to empirically validate our theoretical analysis.

\paragraph{Models \& Datasets.} We conduct experiments on Llama3~\cite{Grattafiori2024Llama} and Qwen3~\cite{Yang2025Qwen} models.
We use the implementations and instruction fine-tuned checkpoints made publicly available via Huggingface~\cite{wolf-etal-2020-transformers} without modification.
We construct our calibration dataset using 128 random sequences of 2048 tokens sampled from the WikiText2 \cite{merity2017pointer} training dataset, and evaluate perplexity on the corresponding test split.
While perplexity serves as a standard proxy for generative quality, it does not directly reflect downstream reasoning performance; we complement it with an analysis of zero-shot accuracy to assess generalization.
Specifically, we use LightEval~\cite{lighteval} to evaluate five reasoning tasks: ARC (challenge and easy)~\cite{clark2018think}, HellaSwag \cite{zellers2019hellaswag}, PIQA \cite{bisk2020piqa}, and Winogrande \cite{sakaguchi2021winogrande}.
We provide ablations on the calibration source, calibration size, and model architecture in Appendix~\ref{appendix:ablations}.
We note that comprehensive LLM evaluation remains an open problem, and we leave deeper analyses for future work.

\begin{table*}[t!]
\centering
\caption{We present the WikiText2 perplexity of Llama3 models quantized to INT4, comparing block rotations with and without \OurAlgo~across block sizes. We provide full-vector rotations for reference, which is equivalent to QuaRot. All results use Qronos for error correction; additional results with RTN and GPTQ are provided in Table~\ref{tbl:ablation_composition}. BF16 results can be found in Table~\ref{tbl:block_vector_baselines}.}
\resizebox{0.7\textwidth}{!}
{\begin{tabular}{cccccccccc|c}
\toprule
 &  & \textbf{16} & \textbf{32} & \textbf{64} & \textbf{128} & \textbf{256} & \textbf{512} & \textbf{1024} & \textbf{2048} & \textbf{Full} \\
\midrule
\multirow{2}{*}{\textbf{1B}} & No Permute & 35.8 & 25.4 & 23.5 & 20.4 & 18.9 & 17.5 & 16.6 & 16.4 & 16.1 \\
 & \OurAlgo\(^*\) & 18.2 & 16.9 & 16.6 & \textbf{15.9} & \textbf{15.9} & 16.1 & 16.1 & 16.1 & 16.1  \\
\midrule
\multirow{2}{*}{\textbf{3B}} & No Permute & 18.9 & 15.6 & 14.4 & 14.4 & 14.0 & 13.8 & 13.2 & 13.4 & 12.6 \\
 & \OurAlgo\(^*\) & 14.2 & 13.3 & 12.8 & 12.6 & \textbf{12.4} & 12.6 & 12.6 & 12.6 & 12.6 \\
\midrule
\multirow{2}{*}{\textbf{8B}} & No Permute & 32.0 & 12.8 & 9.5 & 8.8 & 8.4 & 8.2 & 8.2 & 8.1 & \textbf{7.9} \\
 & \OurAlgo\(^*\) & 9.5 & 8.5 & 8.1 & 8.0 & \textbf{7.9} & \textbf{7.9} & \textbf{7.9} & \textbf{7.9} & \textbf{7.9} \\
\bottomrule
\end{tabular}}
\label{tbl:block_size_vs_perplexity}
\end{table*}

\paragraph{Implementation Details.} 
To describe our implementations, we decouple the \emph{quantization graph} from the \emph{quantization pipeline}.
The former specifies where permutations and rotations are inserted into the compute graph, while the latter defines how different methods are composed throughout our experiments.
Figure~\ref{fig:perq_architecture} illustrates the quantization graph architecture primarily considered in our experiments.
Following QuaRot~\cite{QuaRot} and SpinQuant~\cite{SpinQuant}, rotations are merged into linear layers when possible (\(R_1\) and \(R_2\)) and online rotations are only present at the inputs to the down projection layer in the feedforward network~(\( \Tilde{R}_3 \)).
{\OurAlgo}~is defined at the pipeline level as a framework for equalizing permutation-equivariant regions; concrete methods correspond to different compositions of permutation, rotation, and rounding algorithms within the pipeline visualized in Figure~\ref{fig:perq-pipeline}.
We primarily evaluate two \OurAlgo~compositions.
\textbf{\OurAlgo\(^*\)} composes our {MassDiff} permutation with full-vector Hadamard rotations at \( R_1 \) and \( R_2 \), block Hadamard rotations at \( \Tilde{R}_3 \), and Qronos rounding.
\textbf{\OurAlgo\(^\dag\)} uses the same permutation and block rotation at \(\Tilde{R_3}\), but learns full-vector rotations \(R_1\) and \(R_2\) via Cayley optimization~\cite{SpinQuant} before rounding to nearest (RTN).
For all \OurAlgo~variants, we apply block rotations only at \( \Tilde{R}_3 \) and equalize only those regions in the graph; \( R_1 \) and \( R_2 \) are full-vector and are not equalized.
To isolate MassDiff's contribution from the rest of the pipeline, we evaluate ``No Permute'' variants of both \OurAlgo\(^*\) and \OurAlgo\(^\dag\) on perplexity and zero-shot tasks (Appendix~\ref{appendix:ablations}, Table~\ref{tbl:no_permute_baselines}).
We provide hyperparameters and additional implementation details in Appendix~\ref{appendix:details}, and further ablations on alternative permutation algorithms, rotation merging strategy, and pipeline composition in Appendix~\ref{appendix:ablations}.

\paragraph{Baseline Details.}
We compare \OurAlgo~against MR-GPTQ~\cite{egiazarian2026bridging} and BRQ~\cite{shao2025block}, which are both recent methods that apply block rotations and perform rounding using GPTQ.
To ensure a controlled comparison, we evaluate MR-GPTQ and BRQ using the same quantization graph architecture illustrated in Figure~\ref{fig:perq_architecture}, merging rotations into linear layers when possible.
MR-GPTQ and BRQ are therefore equivalent within our experiment design: both correspond to applying merged block rotations at \( R_1 \) and \( R_2 \), and online block rotations at \( \Tilde{R}_3 \) with an identity permutation \( P_3 = I_d \), followed by GPTQ rounding.
We study the unfused, fully online rotation architecture originally proposed by~\citealt{egiazarian2026bridging} separately in an ablation (Appendix~\ref{appendix:ablations}, Figure~\ref{fig:layerwise_architecture}, Table~\ref{tbl:ablation_fusion}).
In addition to MR-GPTQ/BRQ, we introduce two variants that isolate the effect of the rounding stage: MR-RTN and MR-Qronos, which replace GPTQ with RTN and Qronos, respectively, while applying the same block rotations.
{We also evaluate BRQ-Spin, which learns block rotations at \(R_1, R_2 \in \mathbb{R}^{b \times b}\) via Cayley optimization {before rounding with GPTQ}.}
We provide additional implementation details in Appendix~\ref{appendix:details}.

\paragraph{Evaluation of Block Size.}
Table~\ref{tbl:block_size_vs_perplexity} reports WikiText2 perplexity comparing block rotations with and without permutations when quantizing weights and activations to INT4.
We use Qronos as our rounding algorithm here, and report additional results with RTN in Appendix~\ref{appendix:ablations} (Table~\ref{tbl:block_size_vs_perplexity_rtn}).

Two clear trends emerge.
First, even in the absence of permutations, model quality generally improves as block size increases, with perplexity approaching that of full-vector rotations (equivalent to QuaRot with Qronos).
{This behavior is consistent with Equation~\ref{eq:prob_bound}, which predicts post-rotation outliers are reduced as \( b \to d \) with diminishing returns.}
Second, \OurAlgo~consistently improves perplexity across all models and block sizes, with particularly large gains at small block sizes.
Moreover, \OurAlgo~completely closes the gap with block sizes of 128 or greater. 
This demonstrates the effectiveness of permutations in mitigating the limitations of block rotations as identified in Proposition~\ref{prop:block_rotations}.

\begin{table*}[t!]
\centering
\caption{We report WikiText2 perplexity and average zero-shot accuracy for Llama3 and Qwen3 models quantized to various data formats using different pipeline compositions, all instantiated within the same graph architecture visualized in Figure~\ref{fig:perq_architecture}. All variants of MR-GPTQ and BRQ apply block rotations without permutation, while \OurAlgo~variants {also apply permutation-based equalization} prior to rotation.}
\resizebox{0.95\textwidth}{!}
{\begin{tabular}{ll|cccccc|cccccc}
\toprule
& & \multicolumn{6}{c}{\textbf{Llama3}} &  \multicolumn{6}{|c}{\textbf{Qwen3}} \\
& & \multicolumn{3}{c}{\textbf{WikiText2 ($\downarrow$)}} & \multicolumn{3}{c}{\textbf{0-shot ($\uparrow$)}} & \multicolumn{3}{|c}{\textbf{WikiText2 ($\downarrow$)}} & \multicolumn{3}{c}{\textbf{0-shot ($\uparrow$)}} \\
Format & Methods & 1B & 3B & 8B & 1B & 3B & 8B & 1.7B & 4B & 8B & 1.7B & 4B & 8B \\
\midrule
BF16 & - & 11.8 & 9.8 & 6.5 & 51.8 & 59.8 & 66.0 & 15.2 & 12.2 & 8.6 & 48.4 & 50.9 & 53.5 \\
\midrule 
\multirow{6}{*}{\textbf{INT4}} & MR-RTN  & 3e3 & 1e4 & 9e3 & 34.9 & 34.8 & 34.7 & 9e2 & 8e2 & 1e2 & 35.2 & 34.8 & 36.2 \\
& MR-GPTQ / BRQ  & 2e3 & 6e3 & 7e3 & 35.1 & 34.9 & 35.1 & 9e2 & 2e2 & 79.5 & 35.3 & 36.3 & 36.5 \\
& MR-Qronos  & 41.8 & 84.5 & 1e2 & 38.7 & 36.9 & 35.5 & 41.8 & 35.8 & 24.3 & 38.1 & 41.1 & 41.3 \\
& BRQ-Spin  & 1e3 & 5e3 & 5e3 & 34.6 & 34.4 & 34.8 & 5e2 & 2e2 &  56.2 & 35.4 & 35.6 & 36.3 \\
& \OurAlgo\(^*\)  & 16.9 & 13.3 & 8.5 & \textbf{47.7} & 54.2 & 60.3 & 18.9 & 14.9 & 10.8 & 44.0 & \textbf{47.0} & 49.9 \\
& \OurAlgo\(^\dag\) & \textbf{15.9} & \textbf{10.9} & \textbf{8.4} & {46.8} & \textbf{56.1} & \textbf{60.9} & \textbf{13.6} & \textbf{11.1} & \textbf{9.5} & \textbf{45.8} & {46.7} & \textbf{50.7} \\
\midrule 
\multirow{6}{*}{\textbf{FP4}} & MR-RTN  & 70.0 & 40.5 & 26.6 & 39.2 & 44.5 & 46.2 & 93.0 & 4e2 & 1e2 & 37.3 & 35.6 & 38.5 \\
& MR-GPTQ / BRQ  & 43.2 & 27.0 & 18.9 & 42.6 & 47.0 & 50.7 & 60.0 & 2e2 & 93.0 & 38.0 & 37.5 & 39.2 \\
& MR-Qronos  & 23.9 & 15.9 & 11.6 & 44.4 & \textbf{51.9} & 53.5 & 29.2 & 72.5 & 26.2 & 40.8 & 37.8 & 42.5 \\
& BRQ-Spin  & 51.2 & 31.1 & 20.4 & 41.4 & 47.0 & 48.8 & 54.5 & 1e2 & 39.2 & 39.2 & 37.5 & 41.1 \\
& \OurAlgo\(^*\)  & 21.0 & 16.6 & 10.6 & 44.2 & 51.6 & 54.9 & 21.4 & 16.6 & 12.0 & 42.4 & \textbf{45.5} & 47.7 \\
& \OurAlgo\(^\dag\)  & \textbf{18.0} & \textbf{12.8} & \textbf{9.8} & \textbf{46.1} & 51.5 & \textbf{55.6} & \textbf{15.6} & \textbf{12.4} & \textbf{9.8} & \textbf{43.2} & 45.0 & \textbf{49.6} \\
\midrule 
\multirow{6}{*}{\textbf{MXFP4}} & MR-RTN  & 18.5 & 15.2 & 8.2 & 47.8 & 52.1 & 61.5 & 23.1 & 23.5 & 11.2 & 43.0 & 47.8 & 50.3 \\
& MR-GPTQ / BRQ & 14.2 & 11.2 & 7.4 & 50.3 & 57.5 & 63.4 & 16.9 & 14.9 & 9.6 & \textbf{46.8} & 48.7 & 52.8 \\
& MR-Qronos  & 14.0 & 11.6 & 7.4 & \textbf{51.0} & \textbf{57.6} & 62.4 & 16.9 & 14.9 & 9.6 & 45.5 & 48.6 & 51.4 \\
& BRQ-Spin  & 14.9 & 13.2 & 7.8 & 48.5 & 53.6 & \textbf{63.8} & 19.1 & 12.6 & 9.6 & 44.9 & 49.0 & 51.6 \\
& \OurAlgo\(^*\)  & 14.2 & 11.4 & 7.4 & 48.3 & 57.5 & 63.1 & 17.1 & 14.0 & 9.9 & 46.0 & 48.7 & 50.6 \\
& \OurAlgo\(^\dag\)  & \textbf{13.2} & \textbf{9.6} & \textbf{7.2} & 49.1 & 57.2 & 63.0 & \textbf{11.8} & \textbf{9.3} & \textbf{7.8} & 46.7 & \textbf{50.5} & \textbf{54.2} \\
\bottomrule
\end{tabular}}
\label{tbl:block_vector_baselines}
\end{table*}

\paragraph{Comparisons with Block Rotation Baselines.}
Table~\ref{tbl:block_vector_baselines} reports WikiText2 perplexity and average zero-shot accuracy for Llama3 and Qwen3 models across multiple data formats.
All methods share the same quantization graph and differ only in their pipeline composition.
Following the MR-GPTQ and BRQ proposals, all block rotations in this comparison use a fixed block size of 32.

Our results show that INT4 is a stringent stress test for block rotations. MR-RTN and MR-GPTQ/BRQ frequently exhibit severe perplexity degradation in this regime, even with gradient-based rotation optimization (BRQ-Spin). Replacing GPTQ with Qronos substantially improves MR-style baselines, but remains far from the full-precision baseline. In comparison, \OurAlgo~consistently improves perplexity and on average maintains over 91\% of the full-precision zero-shot accuracy across both model families, with and without gradients.
These trends support our theoretical analysis: while improved rounding can further reduce quantization error, \OurAlgo~directly addresses the limitations of block rotations identified in Section~\ref{sec:block-theory}.
Interestingly, we observe that \OurAlgo\(^\dag\) sometimes achieves lower perplexity than the full-precision baseline (\textit{e.g.}, Qwen3 1.7B and 4B). In most cases, these gains are accompanied by modest reductions in zero-shot accuracy, suggesting mild overfitting to the calibration dataset.
We note that this behavior is consistent with prior observations in which fine-tuning from a checkpoint can improve test accuracy despite quantization in the loop~\cite{colbert2024a2q+, liu2025paretoq}.

The interaction between quantization pipeline composition and target data format yields additional insights.
MR-GPTQ and BRQ were primarily proposed for and evaluated on microscaling (MX) formats, which inherently mitigate outliers via group-wise scaling.
In contrast, INT4 and FP4 rely on per-channel scaling and are therefore more sensitive to outliers.
Accordingly, consistent with the results from~\citealt{egiazarian2026bridging} and~\citealt{shao2025block}, we observe that MR-style baselines perform substantially better on MXFP4 than on FP4 or INT4. Nevertheless, even for MXFP4, \OurAlgo\(^{\dag}\) consistently yields the best perplexity across both model families, and on average maintains 95\% of the full-precision zero-shot accuracy.
Interestingly, while FP4 is generally more forgiving than INT4 for MR-style pipelines, we highlight that \OurAlgo~consistently performs better on INT4 than on FP4, with \OurAlgo\(^\dag\) consistently improving over \OurAlgo\(^*\) across all formats.

\section{Discussion \& Conclusions}

We present the first systematic, non-asymptotic analysis of outlier suppression with block Hadamard rotations.
Our analysis yields sufficient conditions under which Hadamard rotations suppress outliers, and reveals that the effectiveness of block Hadamard rotations is fundamentally limited by the activation block with the largest mass.
Guided by these insights, we design \OurAlgo~to equalize blockwise activation mass prior to rotation.
Our experimental results closely align with our analysis: \OurAlgo~consistently improves model quality with block Hadamard rotations across block sizes, data formats, and model families,  which validates our central claim that {equalizing mass across blocks} mitigates the limitations we identified.
Moreover, \OurAlgo~outperforms existing block rotation baselines, namely MR-GPTQ and BRQ variants.
Finally, from a systems perspective, the overhead introduced by \OurAlgo~is negligible; {MassDiff} calibrates permutations in under two minutes for Llama3 8B, and the resulting matrices are fully merged into the graph before deployment to avoid adding inference overhead.

\paragraph{Towards Choosing a Block Size.} Our analysis offers principled insight into the choice of block size \(b\) through the lens of outlier suppression. Probabilistically, we prove that increasing \(b\) reduces worst-case post-rotation outliers with high probability (Proposition~\ref{prop:probablist_evolution}), capturing the general trend observed in our experiments (Table~\ref{tbl:block_size_vs_perplexity} and Appendix~\ref{appendix:ablations}, Table~\ref{tbl:block_size_vs_perplexity_rtn}). Deterministically, however, we prove that the worst-case bound itself grows with \(b\) (Corollary~\ref{cor:deterministc_evolution}), offering insight into the deviations where smaller block rotations occasionally match or outperform larger ones, as also reported by~\citet{egiazarian2026bridging}. Computationally, it is known that smaller \(b\) is favorable: online block rotations cost \(\mathcal{O}(d \log b)\) versus \(\mathcal{O}(d \log d)\) for full-vector rotations (Appendix~\ref{appendix:hadamard-remark}). Taken together, these results characterize the accuracy--compute trade-off that governs \(b\); while we do not offer a Pareto-optimal block size, deriving one from first principles remains an exciting direction for future work.

\paragraph{Limitations.}
Our analysis is based on {non-asymptotic} bounds characterizing worst-case behavior for activation outlier suppression with block Hadamard rotations.
Empirically, our results suggest this worst-case analysis yields an effective optimization target despite its inherent coarseness, likely because few-bit quantization error is dominated by activation outliers in LLMs~\cite{QuaRot}. In practice, substantial variation exists within these bounds, depending on the complex interaction between models, datasets, and quantization schemes, among other components. This limitation is most clearly reflected in MX formats such as MXFP4, which inherently mitigate outliers via group-wise scaling; in this regime, outlier suppression with rotations is less critical, and the performance gap between \OurAlgo~and MR-style baselines narrows. These observations highlight that the benefits of permutation-{based} equalization are most pronounced in settings where outliers materially impact quantization error.
Finally, our theoretical analysis does not explicitly account for weight outlier suppression or alternative rotation structures, such as the Givens matrices studied in~\citealt{xu2025butterflyquant}.
We leave such extensions for future work.

\section*{Impact Statement}

This paper presents work whose goal is to advance the field of efficient machine learning. 
Reducing the precision of weights and activations is known to reduce the maximum attainable model quality. As such, quantization, like other lossy compression methods such as pruning and distillation, can shift model behavior in subtle
ways. Therefore, it is important for compressed models to be evaluated before deployment.
Furthermore, there are potential societal consequences of machine learning in general, none which we feel must be specifically highlighted here.

\paragraph{Reproducibility Statement.}
We integrate \OurAlgo~into the open-source Brevitas quantization library~\cite{brevitas}. Code and instructions for reproducing experiments can be found here: \url{https://xilinx.github.io/brevitas/dev/papers/perq.html}.

\section*{Acknowledgements}

We would like to thank Michaela Blott, Gabor Sines, Ralph Wittig, Syed Naim, Yonas Bedasso, and Max Keihn from AMD for their support.

\bibliography{references}
\bibliographystyle{icml2026}

\newpage
\appendix
\onecolumn

\section{On the Compute Requirements of Hadamard Rotations}
\label{appendix:hadamard-remark}

The analysis in Section~\ref{sec:full-vector-theory} establishes, both theoretically and empirically, that full-vector Hadamard rotations can strongly suppress outliers. However, as discussed in Section~\ref{sec:preliminaries}, existing methods often require inserting online rotations into the compute graph to exploit these benefits, introducing inference overhead.
Although optimized implementations of Hadamard rotations exist~\cite{fwht, fast-hadamard-transform}, this overhead remains non-negligible~\cite{QuaRot}. 
An emerging response is to restrict full-vector Hadamard rotations to operate on smaller partitions of an activation vector via block Hadamard rotations~\cite{shao2025block, egiazarian2026bridging}.
The following remark summarizes the compute requirements of full-vector and block Hadamard rotations.

\begin{remark}[Compute Requirements of Hadamard Rotations]
\label{remark:hadamard-ops}
Consider applying a Hadamard rotation to an activation vector of dimension $d=kt$, where $t$ is the largest factor of $d$ that is not a power of two and $k$ is the remaining power-of-2 component.
When $t=1$, then $d$ is a power of 2 and the Hadamard rotation can be implemented via a butterfly decomposition with $\mathcal{O}(d \log d)$ operations \cite{fwht}.
However, for $t > 1$, the standard butterfly decomposition does not directly apply. In Appendix~\ref{appendix:non-po2-hadamard}, we show that a non-power-of-2 Hadamard rotation can be reduced to complexity $\mathcal{O}\left(dt + d\log k\right)$.
Consequently, for any $d$, full-vector online Hadamard rotations incur at least \( \mathcal{O}(d \log d) \) compute cost, which becomes significant at the large activation dimensions encountered in state-of-the-art LLM architectures.
This motivates restricting online rotations to operate on smaller, power-of-2 partitions as a general strategy for reducing inference-time computation.
Accordingly, a block Hadamard rotation with block size $b$ applies independent Hadamard rotations within each block and has complexity $\mathcal{O}(d \log b)$, yielding substantial compute savings when $b \ll d$.
\end{remark}
Table~\ref{tab:hadamard-op-counts} reports the minimum compute operations required to rotate the input activations of down projection layers in various Llama3 and Qwen3 models with full-vector or block Hadamard rotations.\footnote{{The full-vector costs depend strongly on the largest non-power-of-2 factor of the activation dimension, which can lead to higher operation counts even for smaller vectors (\textit{e.g.}, Qwen3 4B vs. Qwen3 8B).}}
Across models, restricting rotations to smaller blocks substantially reduces the compute requirements of online  rotations.

\begin{table*}[h!]
\centering
\caption{{We present the minimum compute operations (additions and subtractions) required to apply full-vector and block Hadamard rotations to the down projection layer inputs in Llama3 and Qwen3 models.
For each model, we list the dimension~\( d \) of the input activations, its decomposition into power-of-2 and non-power-of-two components \( k \) and \( t \), respectively, and the operation counts for block sizes \( b \in \{ 32, 128, 512 \} \) {relative} to full-vector rotations.}}
\label{tab:hadamard-op-counts}
\resizebox{0.85\textwidth}{!}
{\begin{tabular}{ccccc|cccc}
        \toprule
        \textbf{Model} & \textbf{Size} & \textbf{\( d \)} & \( k \) & \( t \) & \textbf{32} & \textbf{128} & \textbf{512} & \textbf{Full} \\
        \midrule
        \multirow{2}{*}{{Llama3}} &  1B/3B & 8192 & \(2^{13}\) & 1 & 40960 \textcolor{gray}{(38\%)} & 57344 \gray{(54\%)} & 73728 \gray{(69\%)} & 106496 \\
        & 8B & 14336 & \(2^{11}\) & 7 & 71680 \gray{(28\%)} & 100352 \gray{(39\%)} & 129024 \gray{(50\%)} & 258048 \\
        \midrule
        \multirow{3}{*}{{Qwen3}} & 1.7B & 6144 & \(2^{11}\) & 3 & 30720 \gray{(36\%)} & 43008 \gray{(50\%)} & 55296  \gray{(64\%)} & 86016 \\
        & 4B & 9728 & \(2^{9}\) & 19 & 48640 \gray{(18\%)} & 68096 \gray{(25\%)} & 87552 \gray{(32\%)} & 272384  \\
        & 8B & 12288 & \(2^{12}\) & 3 & 61440 \gray{(33\%)} & 86016 \gray{(47\%)} & 110592 \gray{(60\%)} & 184320 \\
        \bottomrule
\end{tabular}}
\end{table*}

\subsection{An Optimized Non-Power-of-2 Hadamard Rotation}
\label{appendix:non-po2-hadamard}

Let $X \in \mathbb{R}^d$ be an activation vector and $R \in \mathbb{R}^{d \times d}$ be a Hadamard matrix, with $ d \ge 4$. If the dimension $d$ is a power of 2, the rotation of $X$ by $R$ can be fully decomposed into butterfly stages, resulting in $\mathcal{O}(d \log{d})$ add/subtract operations. If $d$ is not a power of 2, it can be factored into a power of 2 and a non-power-of-2 component as $d = kt$, where $t$ is the largest non-power-of-2 factor of $d$ and $k$ is the power-of-2 remainder.
Because the matrix $R$ is constructed from $k'=\log (k) - 2$ recursive applications of a $4t$-dimensional base matrix, the overall computation naturally splits into two parts: (1) $k'$ radix‑2 butterfly stages, followed by (2) $2^{k'}$ instances of $4t$-dimensional rotations.

The $4t$-dimensional rotations themselves can be further optimized by first computing all sums and differences over every group of four adjacent inputs, and then performing a final stage that adds or subtracts $4t$ groups of $t$ elements according to a sign pattern determined by the definition of the $4t$-dimensional Hadamard matrix. Figure \ref{fig:non_po2_hadamard_structure} illustrates this rotation structure for an activation vector in the Llama3 8B model with dimension $14336$. This dimension factorizes as $512 \times 28$, resulting in $9$ radix‑2 butterfly stages and $512$ independent $28$-dimensional rotations. Each $28$‑dimensional rotation is further decomposed into three stages: the first two stages compute seven groups of four‑input additions and subtractions, and the final stage performs $28$ groups of seven‑input add and subtract operations. 

\begin{figure*}[ht]
    \centering
    \includegraphics[width=0.85\linewidth]{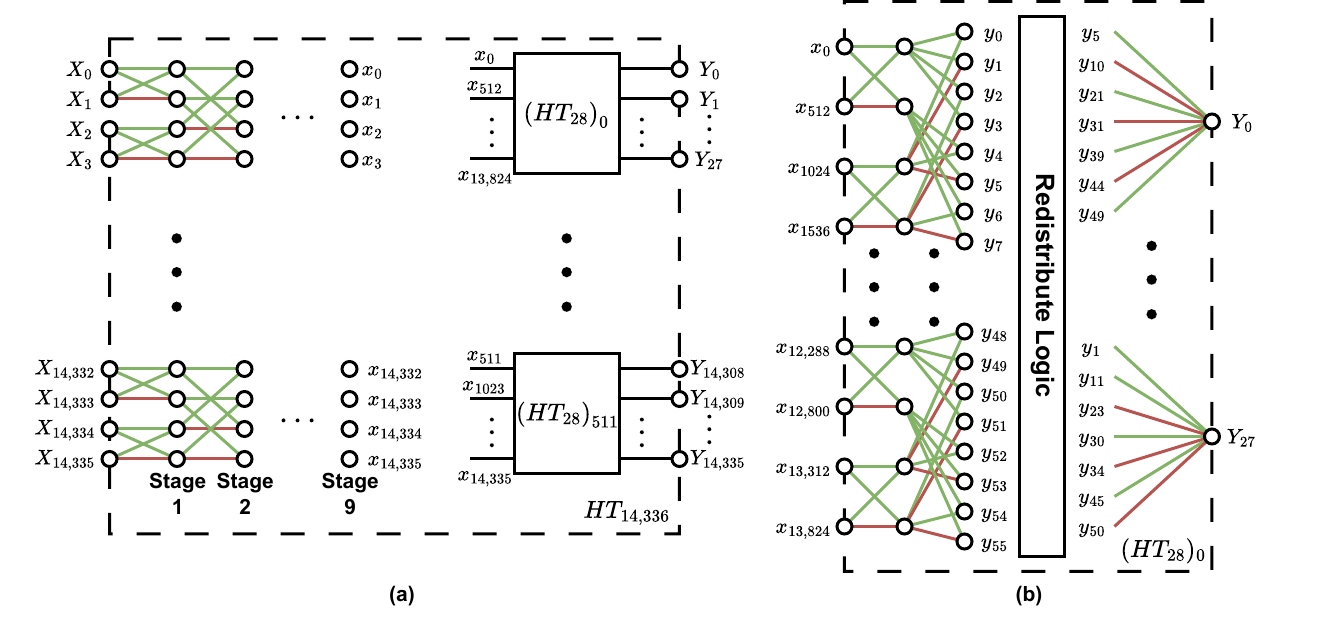}
    \caption{Structure of (a) $14336$-dimensional Hadamard rotation and (b) $28$-dimensional sub-rotation $(HT_{28})_0$. All lines marked in green are additions and the ones marked in red are subtractions.\vspace{-1em}}
    \label{fig:non_po2_hadamard_structure}
\end{figure*}

Our proposed optimization reduces the number of add/subtract operations of a $d$-dimensional rotation from $\mathcal{O}(d(k' + 4t - 1))$ to $\mathcal{O}(d(k' + t + 2))$, equivalently $\mathcal{O}(dt + d\log{k})$. Due to the nature of the optimization, this provides the largest gains when $k'$ is small and $t$ is large.
In particular, for fixed \( k' \) and as \( t \to \infty \), the reduction asymptotically yields a \( 4 \times \) reduction in operation count.
Table \ref{tab:non-po2-ops} reports the number of operations required for online rotations in non-power-of-2 dimensions found in Llama3 and Qwen3 models, comparing our reduction to existing techniques. Overall, our approach reduces the total number of operations by orders of magnitude relative to dense multiplication, and $1.4-2.9\times$ reduction in compute requirements compared to existing butterfly decomposition-based implementations \cite{fast-hadamard-transform}.

\begin{table*}[h!]
\centering
\caption{We report the number of operations required to rotate the input to the down projection layer in various LLMs with different methods, as well as the reduction in operations (in parentheses).}
\label{tab:non-po2-ops}
\resizebox{0.8\textwidth}{!}
{
\begin{tabular}{ccc|ccc}
    \toprule
    {\textbf{Model}} & \( d \) &  \( 2^{k'} \times 4t \) & \textbf{Matmul} & \textbf{Butterfly + Matmul} & \textbf{Ours} \\
    \midrule
    Llama3-8B & 14336 & $2^9 \times 28$ & 205.51M \gray{(796.4$\times$)} & 516.10K \gray{(2.0$\times$)} & 258.05K \\
    Qwen3-0.6B & 3072 & $2^8 \times 12$  & 9.43M \gray{(236.3$\times$)} & 58.37K \gray{(1.5$\times$)} & 39.94K \\
    Qwen3-1.7B & 6144 & $2^9 \times 12$  & 37.74M \gray{(438.8$\times$)} & 122.88K \gray{(1.4$\times$)} & 86.02K \\
    Qwen3-4B & 9728 & $2^7 \times 76$ & 94.62M \gray{(347.4$\times$)} & 797.70K \gray{(2.9$\times$)} & 272.38K \\
    Qwen3-8B & 12288 & $2^{10} \times 12$ & 150.98M \gray{(819.1$\times$)} & 258.05K \gray{(1.4$\times$)} & 184.32K \\
    \bottomrule
\end{tabular}
}
\end{table*}

\section{Additional Implementation Details}
\label{appendix:details}

\paragraph{INT4 Quantization.} We use the \( q \)-bit integer quantizer defined as
\begin{equation}
\mathcal{Q}(X) = s \cdot \text{clip}\left(\left\lfloor \frac{X}{s} \right\rceil - z, \min \mathcal{A}, \max \mathcal{A} \right) + z,
\label{eq:quantizer}
\end{equation}
where \( s \) is the quantization step size (or scaling factor), \( z \) is the zero-point, and \( \lfloor \cdot \rceil \) is the round-to-nearest (RTN) operator~\cite{zhang2022learning, gholami2022survey}.
We quantize weights using the standard symmetric weight quantizer, where zero point \( z = 0 \) and scaling factor \( s \in \mathbb{R} \) is optimized per-channel via linear search to minimize the mean squared error (MSE) loss between the full-precision and quantized weights, as is common practice~\cite{QuaRot, egiazarian2026bridging, zhang2026qronos}. Activations are quantized using the standard asymmetric activation quantizer, where zero point \( z = \min(X) / s\) and scaling factor \( s = (\max(X) - \min(X) )/ (2^q - 1) \) are dynamically defined per-token. 

\paragraph{(MX)FP4 Quantization.} We use the standard \(q\)-bit floating-point quantizer defined as
\begin{equation}
    Q(X) := s \cdot \text{clip}\left( s' \left\lfloor \frac{X}{ss'} \right\rceil; \min \mathcal{A}, \max \mathcal{A} \right) ,
\end{equation}
where \( \log s' = \max( \lfloor X/s \rfloor, 2^{e-1}) - m \) for target alphabet \( \mathcal{A} \) defined by \( m \) mantissa bits and \( e \) exponent bits. We adhere to the OCP specification~\cite{ocp2024mx}, which defines \(e=2\) and \(m=1\) for FP4.
We quantize weights and activations symmetrically with zero point \( z = 0 \). For FP4 weights, scaling factor \( s \in \mathbb{R} \) is optimized per-channel via linear search to minimize the MSE loss between the full-precision and quantized weights. For FP4 activations, scaling factor \( s = \Vert X \Vert_\infty / (2^{q-1} - 1) \) is defined dynamically per-token. For MXFP4 weights and activations, scaling factors are defined for each group of 32 elements and are rounded down to a power of 2. For example, the scaling factor for the \( j \)-th block in activation vector \( X \) is \( \log s_{\{j\}} = \log \left( \left\lfloor  \Vert X_{\{j\}} \Vert_\infty / (2^{q-1} - 1) \right\rfloor \right) \).

\paragraph{Algorithm hyperparameters.}
When applying GPTQ, the calibration set is constructed from 128 samples of 2048-token sequences drawn from the WikiText2 training set. We use the dampened covariance matrix \( \Tilde{X}^T\Tilde{X} + \lambda I_d \), where \( \Tilde{X} \) denotes the rotated and quantized activations and \( \lambda \) is 1\% of the average diagonal of \( \Tilde{X}^T \Tilde{X} \).
We similarly use the dampened covariance matrix when applying Qronos, but choose \( \lambda \) to be based on the maximum singular value of the covariance matrix such that \( \lambda = \alpha \cdot \sigma_1 \) with \( \alpha=1e^{-3} \) as proposed by \citealt{zhang2026qronos}.
For both GPTQ and Qronos, we quantize weights in descending order of the diagonal of \( \Tilde{X}^T \Tilde{X} \), as it has become common practice~\cite{franco2025improving, colbert2025axe} and can provably improve error correction~\cite{zhang2025provable}.
Note that, when composing \OurAlgo~with GPTQ or Qronos, \( \Tilde{X} \) is also permuted.
When calibrating permutations with {MassDiff}, we use only one sample of a 2048-token sequence randomly drawn from WikiText2; additional samples did not consistently yield additional benefit (see Appendix~\ref{appendix:ablations}, Table~\ref{tbl:massdiff_zigzag_128k}).
When learning rotations, we draw 800 samples of 2048-token sequences for Cayley SGD, as proposed by~\citealt{SpinQuant}.
In both cases, we still draw 128 samples for GPTQ or Qronos if error correction is applied.
Unlike the original study by~\citealt{SpinQuant}, which optimized rotations after applying activation quantization but before weight quantization, we apply Cayley SGD for 100 epochs after both weights and activations have been quantized using the straight-through estimator~\cite{bengio2013estimating}. All other hyperparameters (\textit{e.g.}, learning rate and batch size) remain unchanged from the original study.

We implement all variants of \OurAlgo~in PyTorch with the Brevitas quantization library based on v0.12.1 \cite{brevitas_v12_1}, using their unmodified implementations of existing algorithms such as GPTQ, Qronos, QuaRot, and SpinQuant. We quantize all models using a single AMD Instinct MI210 GPU
\footnote{AMD, AMD Instinct, and combinations thereof are trademarks of Advanced Micro Devices, Inc.}
with 64GB of memory.

\section{Ablations}
\label{appendix:ablations}

We perform a series of ablation studies to assess the impact of key design choices, including the permutation method, rotation merging strategy, and pipeline composition. 
These experiments are intended to isolate the contribution of each component and to clarify how design decisions interact in practice.

\paragraph{Isolating {MassDiff} with RTN.} Table~\ref{tbl:block_size_vs_perplexity} isolates permutation by comparing block rotations with and without {MassDiff} under a fixed Qronos rounding stage.
Here, we further isolate {MassDiff} by also removing the choice of rounding algorithm: we replace Qronos with RTN, leaving only Hadamard rotations and the permutation itself.
We quantize Llama3 1B, 3B, and 8B to INT4 (W4A4) and report WikiText2 perplexity in Table~\ref{tbl:block_size_vs_perplexity_rtn} across block sizes \( b \in \{16, 32, 64, 128, 256, 512, 1024, 2048\} \) and full-vector rotations.
{MassDiff} delivers orders-of-magnitude gains at small block sizes (e.g., Llama3 1B at \( b=32 \) improves from 4e3 to 38.0, and 8B from 6e3 to 12.0), confirming that MassDiff permutations improve outlier suppression, independent of the rounding algorithm.

\begin{table*}[h!]
\centering
\caption{We report WikiText2 perplexity ($\downarrow$) for Llama3 models quantized to INT4, comparing block rotations with and without {MassDiff} across block sizes, using RTN as the rounding algorithm. We provide full-vector rotations for reference.}
\resizebox{0.7\textwidth}{!}
{\begin{tabular}{cccccccccc|c}
\toprule
 &  & \textbf{16} & \textbf{32} & \textbf{64} & \textbf{128} & \textbf{256} & \textbf{512} & \textbf{1024} & \textbf{2048} & \textbf{Full} \\
\midrule
\multirow{2}{*}{\textbf{1B}} & No Permute & 5e3 & 4e3 & 2e3 & 6e2 & 2e2 & 58.0 & 27.0 & 25.8 & 38.0 \\
 & {MassDiff}  & 191.0 & 38.0 & 79.5 & 25.0 & 23.5 & \textbf{22.0} & 24.6 & 22.8 & 38.0 \\
\midrule
\multirow{2}{*}{\textbf{3B}} & No Permute & 2e4 & 7e3 & 9e3 & 1e3 & 2e2 & 54.5 & 21.8 & 20.1 & 22.0 \\
 & {MassDiff}  & 26.6 & 22.0 & 20.1 & 17.8 & \textbf{15.9} & 18.0 & 17.5 & 18.0 & 22.0 \\
\midrule
\multirow{2}{*}{\textbf{8B}} & No Permute & 2e4 & 6e3 & 2e3 & 3e2 & 93.0 & 42.5 & 16.4 & 10.9 & 12.0 \\
 & {MassDiff}  & 29.2 & 12.0 & 12.0 & 10.2 & 9.8 & \textbf{9.6} & 10.1 & 10.2 & 12.0 \\
\bottomrule
\end{tabular}}
\label{tbl:block_size_vs_perplexity_rtn}
\end{table*}

\newpage
\paragraph{Permutations.} In Table~\ref{tbl:permute_fn_ablations}, we compare alternative permutation strategies within a fixed \OurAlgo~pipeline using Qronos rounding and block rotations with \( b = 32 \) to quantize Llama3 and Qwen3 weights and activations to INT4. Random permutations are averaged over 5 seeds, ``absmax" permutes coordinates in descending order of the maximum absolute value observed over the calibration sample, ZigZag uses the strategy proposed by~\citealt{lin2024duquant}, and {MassDiff} is given by Algorithm~\ref{alg:massdiff}. Across model families and sizes, {MassDiff} consistently outperforms these alternative permutation methods, highlighting the importance of redistributing coordinates prior to rounding to balance activation mass.

\begin{table*}[h]
\centering
\caption{We report WikiText2 perplexity and average zero-shot accuracy for different permutation methods evaluated under a fixed \OurAlgo~pipeline using block rotations with \( b = 32 \) and Qronos rounding, without learned rotations. }
\resizebox{0.8\textwidth}{!}
{\begin{tabular}{c|cccccc|cccccc}
\toprule
 & \multicolumn{6}{c}{\textbf{Llama3}} &  \multicolumn{6}{|c}{\textbf{Qwen3}} \\
 & \multicolumn{3}{c}{\textbf{WikiText2 ($\downarrow$)}} & \multicolumn{3}{c}{\textbf{0-shot ($\uparrow$)}} & \multicolumn{3}{|c}{\textbf{WikiText2 ($\downarrow$)}} & \multicolumn{3}{c}{\textbf{0-shot ($\uparrow$)}} \\
 Method & 1B & 3B & 8B & 1B & 3B & 8B & 1.7B & 4B & 8B & 1.7B & 4B & 8B \\
\midrule
 No Permute & 25.6 & 15.6 & 12.9 & 43.8 & 48.6 & 35.4 & 43.2 & 16.6 & 12.6 & 38.2 & 46.7 & 48.0 \\
 Random & 27.8 & 15.3 & 11.9 & 42.8 & 51.1 & 38.6 & 37.0 & 16.3 & 12.6 & 39.0 & 46.6 & 47.2 \\
 Absmax & 20.1 & 14.4 & 10.4 & 45.7 & 54.6 & 58.6 & 25.4 & 15.9 & 11.8 & 42.9 & 47.6 & 48.5 \\
 ZigZag & 17.1 & 13.4 & 8.6 & 46.6 & \textbf{54.8} & 58.1 & \textbf{18.9} & \textbf{14.9} & 10.9 & \textbf{44.0} & \textbf{47.6} & 49.6 \\
 {MassDiff} & \textbf{16.9} & \textbf{13.3} & \textbf{8.5} & \textbf{47.7} & 54.2 & \textbf{60.3} & \textbf{18.9} & \textbf{14.9} & \textbf{10.8} & \textbf{44.0} & 47.0 & \textbf{49.9} \\
\bottomrule
\end{tabular}}
\label{tbl:permute_fn_ablations}
\end{table*}

\paragraph{Permutation Calibration Size.} The comparisons in Table~\ref{tbl:permute_fn_ablations} calibrate each permutation with a single 2048-token sequence per permutation-equivariant region (Appendix~\ref{appendix:details}).
Here, we revisit the comparison with substantially more data: we evaluate {MassDiff}, ZigZag, and a ``No Permute'' baseline on Llama3 1B quantized to INT4 with the \OurAlgo\(^*\) configuration at block sizes \( b \in \{16, 32, 64\} \), calibrating each permutation with 128K tokens per region.
Table~\ref{tbl:massdiff_zigzag_128k} reports WikiText2 perplexity: {MassDiff} matches or surpasses ZigZag at every block size, and both substantially improve over ``No Permute''.
We hypothesize that this gap reflects the design of the two algorithms: {MassDiff} averages \( \ell_1 \) norms over the calibration set, so additional data may sharpen its estimate of blockwise mass per token, whereas ZigZag's coarse coordinate-wise \( \ell_\infty \) reduction may benefit less from additional samples and could be sensitive to rare outliers.

\begin{table*}[h]
\centering
\caption{We report WikiText2 perplexity ($\downarrow$) for Llama3 1B quantized to INT4 with the \OurAlgo\(^*\) configuration, comparing permutation strategies at block sizes 16, 32, and 64 with 128K calibration tokens per permutation-equivariant region.}
\resizebox{0.33\textwidth}{!}
{\begin{tabular}{c|ccc}
\toprule
\textbf{Permutation} & \textbf{b=16} & \textbf{b=32} & \textbf{b=64} \\
\midrule
No Permute & 39.4 & 25.2 & 22.6 \\
ZigZag     & 18.5 & 17.3 & 16.6 \\
{MassDiff} & \textbf{18.4} & \textbf{16.9} & \textbf{16.3} \\
\bottomrule
\end{tabular}}
\label{tbl:massdiff_zigzag_128k}
\end{table*}

\paragraph{Calibration Sensitivity.} Our main evaluation in Section~\ref{sec:results} provides evidence of generalization when evaluating downstream reasoning: permutations calibrated on WikiText2 improve zero-shot accuracy across ARC (Challenge and Easy), PIQA, Winogrande, and HellaSwag on two model families and three model sizes (Table~\ref{tbl:block_vector_baselines}).
Here, we assess sensitivity to the choice of calibration source itself. 
We evaluate Llama3 1B quantized to INT4 with the \OurAlgo\(^*\) configuration (QuaRot rotations with Qronos rounding, \( b = 32 \)), with and without {MassDiff}, across three calibration sources: C4~\cite{raffel2020exploring}, FineWeb~\cite{penedo2024fineweb}, and WikiText2~\cite{merity2017pointer}.
We present the results in Table~\ref{tbl:calibration_sensitivity}. 
Across all three sources, {MassDiff} consistently improves over the ``No Permute'' baseline in WikiText2 perplexity and on each of the five zero-shot tasks (note that the ``No Permute'' rows themselves vary across sources, since Qronos rounding also consumes the calibration data).
The variation across calibration sources is modest and substantially smaller than the gain from {MassDiff}, indicating that {MassDiff} permutations generalize across calibration distributions as well.

\begin{table*}[h]
\centering
\caption{We report WikiText2 perplexity ($\downarrow$) and zero-shot accuracy ($\uparrow$) for Llama3 1B quantized to INT4 with the \OurAlgo\(^*\) configuration, varying the calibration dataset used by both {MassDiff} and Qronos. Best result per calibration source is in bold.}
\resizebox{0.8\textwidth}{!}
{\begin{tabular}{cc|c|ccccc|c}
\toprule
\textbf{Calibration} & \textbf{Permutation} & \textbf{Wiki2} & \textbf{ARC-C} & \textbf{ARC-E} & \textbf{PIQA} & \textbf{WinoG} & \textbf{HellaS} & \textbf{Avg} \\
\midrule
\multirow{2}{*}{C4}        & No Permute & 29.2          & 24.3          & 46.6          & 66.0          & 52.1          & 36.4          & 45.1 \\
                           & MassDiff   & \textbf{18.9} & \textbf{28.9} & \textbf{53.8} & \textbf{68.0} & \textbf{52.2} & \textbf{39.1} & \textbf{48.4} \\
\midrule
\multirow{2}{*}{FineWeb}   & No Permute & 32.2          & 23.9          & 43.5          & 65.9          & 51.3          & 35.9          & 44.1 \\
                           & MassDiff   & \textbf{18.8} & \textbf{26.5} & \textbf{51.0} & \textbf{67.0} & \textbf{52.2} & \textbf{38.4} & \textbf{47.0} \\
\midrule
\multirow{2}{*}{WikiText2} & No Permute & 25.4          & 24.4          & 46.0          & 64.0          & \textbf{53.1} & 35.9          & 44.7 \\
                           & MassDiff   & \textbf{16.9} & \textbf{27.1} & \textbf{52.7} & \textbf{68.4} & 51.0          & \textbf{38.5} & \textbf{47.5} \\
\bottomrule
\end{tabular}}
\label{tbl:calibration_sensitivity}
\end{table*}

\paragraph{Pipeline Composition.} Table~\ref{tbl:ablation_composition} shows how rounding choices in Stage 2 interact with different transformation strategies in Stage 1 within the \OurAlgo~pipeline visualized in Figure~\ref{fig:perq-pipeline}.
From the graph visualized in Figure~\ref{fig:perq_architecture}, ``{MassDiff} + QuaRot" corresponds to a pipeline where full-precision weights and activations are first permuted via {MassDiff} at \( P_3 \), then full-vector Hadamard rotations are applied at \( R_1 \) and \( R_2 \) and an online block Hadamard rotation is applied at \( \Tilde{R}_3 \) to the permuted weights and activations.
``{MassDiff} + SpinQuant" corresponds to a similar pipeline, but instead where full-vector rotations are learned at \( R_1 \) and \( R_2 \) via Cayley SGD.
For these experiments, we use a block size of~32.
Across pipelines, Qronos consistently outperforms GPTQ. However, when learning full-vector rotations, RTN consistently outperforms both.
Thus, \OurAlgo\(^*\) corresponds to ``{MassDiff} + QuaRot + Qronos" while \OurAlgo\(^\dag\) corresponds to ``{MassDiff} + SpinQuant + RTN".
These results highlight the complex interactions between rounding, rotation optimization, and permutation-{based} equalization.
We leave a deeper investigation into this interaction for future work.

\begin{table*}[h]
\centering
\caption{We report WikiText2 perplexity and zero-shot accuracy for different compositions of Stage 1 (permutation and rotation) and Stage 2 (rounding) methods, following the pipeline illustrated in Figure~\ref{fig:perq-pipeline}.}
\resizebox{0.85\textwidth}{!}
{\begin{tabular}{cc|cccccc|cccccc}
\toprule
&  & \multicolumn{6}{c}{\textbf{Llama3}} &  \multicolumn{6}{|c}{\textbf{Qwen3}} \\
\textbf{Stage 1} & \textbf{Stage 2} & \multicolumn{3}{c}{\textbf{WikiText2 ($\downarrow$)}} & \multicolumn{3}{c}{\textbf{0-shot ($\uparrow$)}} & \multicolumn{3}{|c}{\textbf{WikiText2 ($\downarrow$)}} & \multicolumn{3}{c}{\textbf{0-shot ($\uparrow$)}} \\
&  & 1B & 3B & 8B & 1B & 3B & 8B & 1.7B & 4B & 8B & 1.7B & 4B & 8B \\
\midrule
\multirow{3}{*}{\makecell[c]{\textbf{{MassDiff}}\\+\textbf{QuaRot}}} & RTN & 38.0 & 22.0 & 12.0 & 41.8 & 48.4 & 54.0 & 58.0 & 27.0 & 16.1 & 38.0 & 38.7 & 47.9  \\
& GPTQ & 28.4 & 14.9 & 9.8 & 44.9 & 52.5 & 58.9 & 25.8 & 21.0 & 12.6 & 43.4 & 37.9 & 49.3 \\
& Qronos & \textbf{16.9} & \textbf{13.3} & \textbf{8.5} & \textbf{47.7} & \textbf{54.2} & \textbf{60.3} & \textbf{18.9} & \textbf{14.9} & \textbf{10.8} & \textbf{44.0} & \textbf{47.0} & \textbf{49.9} \\
 \midrule
\multirow{3}{*}{\makecell[c]{\textbf{{MassDiff}}\\+\textbf{SpinQuant}}} & RTN & \textbf{15.9} & \textbf{10.9} & \textbf{8.4} & 46.8 & \textbf{56.1} & \textbf{60.9} & \textbf{13.6} & \textbf{11.1} & \textbf{9.5} & \textbf{45.8} & 46.7 & \textbf{50.7} \\
& GPTQ & 19.5 & 13.4 & 9.2 & 46.2 & 55.5 & 60.5 & 20.8 & 17.5 & 12.8 & 42.5 & 41.4 & 47.5 \\
& Qronos & 16.4 & 13.2 & 8.5 & \textbf{48.4} & 55.9 & 59.7 & 18.2 & 14.7 & 10.6 & 44.5 & \textbf{47.4} & 50.2 \\
\bottomrule
\end{tabular}}
\label{tbl:ablation_composition}
\end{table*}

\paragraph{Isolating {MassDiff} via ``No Permute'' Baselines.} Within the \OurAlgo~pipeline visualized in Figure~\ref{fig:perq-pipeline}, setting \( P_3 = I_d \) in \OurAlgo\(^*\) recovers MR-Qronos, and the analogous substitution in \OurAlgo\(^\dag\) recovers a variant of SpinQuant.\footnote{The original proposal by~\citet{SpinQuant} applies an online full-vector Hadamard at \( R_3 \) (i.e., \( b = d \)), which we replace with a block Hadamard \( \tilde{R}_3 \) at \( b = 32 \) to match the rest of our experiments; see Appendix~\ref{appendix:details} for hyperparameter details.}
We evaluate Llama3 8B quantized to INT4 in Table~\ref{tbl:no_permute_baselines}, reporting WikiText2 perplexity and the five zero-shot tasks; we also add GSM8K ~\cite{cobbe2021training} as a reasoning-heavy generation task for this ablation, as it yields interesting new insights.
Both \OurAlgo\(^*\) and \OurAlgo\(^\dag\) substantially outperform their ``No Permute'' counterparts on every task; the contrast is most pronounced on GSM8K, where MR-Qronos and the SpinQuant variant collapse to near-random while \OurAlgo\(^*\) and \OurAlgo\(^\dag\) recover much of the BF16 accuracy.

\begin{table*}[h]
\centering
\caption{We report WikiText2 perplexity ($\downarrow$) and accuracy ($\uparrow$) on five zero-shot tasks plus GSM8K for Llama3 8B quantized to INT4. MR-Qronos and SpinQuant respectively correspond to \OurAlgo\(^*\) and \OurAlgo\(^\dag\) with \( P_3 = I_d \); the SpinQuant variant evaluated here uses an online block Hadamard rotation for \( \tilde{R}_3 \) with \( b = 32 \), matching our experimental setup.}
\resizebox{0.7\textwidth}{!}
{\begin{tabular}{c|c|cccccc}
\toprule
\textbf{Method} & \textbf{Wiki2} & \textbf{ARC-C} & \textbf{ARC-E} & \textbf{PIQA} & \textbf{WinoG} & \textbf{HellaS} & \textbf{GSM8K} \\
\midrule
BF16              & 6.5          & 49.7          & 77.4          & 80.5          & 64.6          & 57.5          & 75.4 \\
\midrule
MR-Qronos         & 1e2          & 21.2          & 27.9          & 54.0          & 50.1          & 26.5          & 1.7 \\
SpinQuant         & 11.6         & 23.1          & 43.4          & 62.2          & 50.4          & 36.8          & 1.2 \\
\OurAlgo\(^*\)    & 8.5          & 40.6          & \textbf{70.2} & 72.5          & 58.5          & 51.2          & \textbf{57.3} \\
\OurAlgo\(^\dag\) & \textbf{8.4} & \textbf{40.8} & 68.4          & \textbf{72.9} & \textbf{59.4} & \textbf{52.4} & 45.9 \\
\bottomrule
\end{tabular}}
\label{tbl:no_permute_baselines}
\end{table*}

\begin{figure*}[h]
    \centering
    \includegraphics[width=\linewidth]{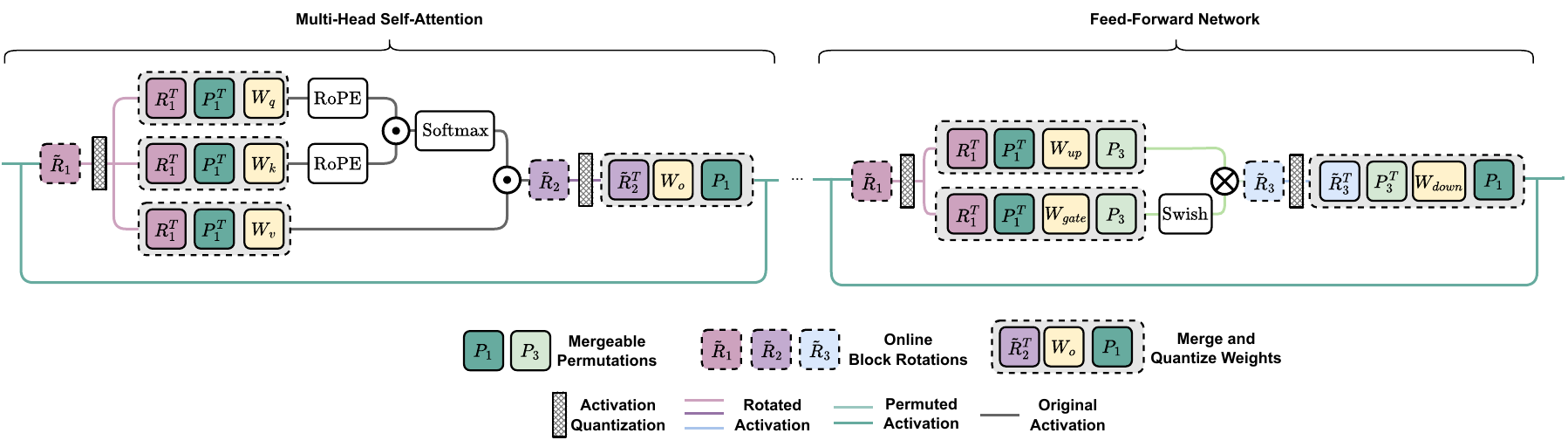}
    \caption{We provide an alternative quantization graph architecture where all rotations are left online, as proposed in~\cite{egiazarian2026bridging}. We still merge permutations wherever possible and quantize the weights and activations for all linear layers.}
    \label{fig:layerwise_architecture}
\end{figure*}

\paragraph{Merging Rotations.}
The quantization graph architecture originally proposed for MR-GPTQ by~\citealt{egiazarian2026bridging} {exclusively applies online block Hadamard rotations (\textit{i.e.}, no rotations are merged).}
In contrast, throughout our main experiments we study the quantization graph architecture illustrated in Figure~\ref{fig:perq_architecture}, where rotations are merged into linear layers wherever possible in order to minimize changes to the deployed compute graph.
Figure~\ref{fig:layerwise_architecture} illustrates the alternative graph architecture, in which all rotations are online block Hadamard rotations but where permutation-equivariant regions can still be equalized and permutations can be merged prior to deployment.
In this architecture, MR-GPTQ corresponds to \( P_1 = P_3 = I_d\).
To isolate the impact of this architectural choice, we evaluate both graph variants across data formats and report results in Table~\ref{tbl:ablation_fusion}, where ``merged" refers to the graph in Figure~\ref{fig:perq_architecture} and ``online" refers to the graph in Figure~\ref{fig:layerwise_architecture}.
In both graph architectures, all block rotations use \( b = 32 \).
Since learning full-vector rotations in the online setting would incur significant deployment overhead, we do not evaluate an online variant of \OurAlgo\(^\dag\) in this ablation.
Nevertheless, \OurAlgo\(^\dag\) consistently yields the highest quality models.

\begin{table*}[h]
\centering
\caption{We report WikiText2 perplexity and average zero-shot accuracy across data formats for ``merged" and ``online" quantization graph architectures for MR-GPTQ and \OurAlgo.}
\resizebox{\textwidth}{!}
{\begin{tabular}{ccc|cccccc|cccccc}
\toprule
& & & \multicolumn{6}{c}{\textbf{Llama3}} & \multicolumn{6}{|c}{\textbf{Qwen3}} \\
& & & \multicolumn{3}{c}{\textbf{WikiText2 ($\downarrow$)}} & \multicolumn{3}{c}{\textbf{0-shot ($\uparrow$)}} & \multicolumn{3}{|c}{\textbf{WikiText2 ($\downarrow$)}} & \multicolumn{3}{c}{\textbf{0-shot ($\uparrow$)}} \\
Format & Method & Graph & 1B & 3B & 8B & 1B & 3B & 8B & 1.7B & 4B & 8B & 1.7B & 4B & 8B \\
\midrule
\multirow{4}{*}{\textbf{INT4}} 
& \multirow{2}{*}{MR-GPTQ} & Merged & 2e3 & 6e3 & 7e3 & 35.1 & 34.9 & 35.1 & 9e2 & 2e2 & 79.5 & 35.3 & 36.3 & 36.5 \\
&  & Online & 2e3 & 5e3 & 9e3 & 35.2 & 35.1 & 34.9 & 6e2 & 1e3 & 2e2 & 35.6 & 35.8 & 35.4 \\
& \multirow{2}{*}{\OurAlgo\(^*\)} & Merged & {16.9} & {13.3} & {8.5} & \textbf{47.7} & {54.2} & {60.3} & {18.9} & {14.9} & {10.8} & {44.0} & \textbf{47.0} & {49.9} \\
&  & Online & 26.6 & 13.4 & 9.1 & 39.5 & 51.7 & 53.7 & 20.4 & 17.5 & 12.8 & {44.0} & 45.8 & 46.7 \\
& \OurAlgo\(^\dag\) & Merged & \textbf{15.9} & \textbf{10.9} & \textbf{8.4} & {46.8} & \textbf{56.1} & \textbf{60.9} & \textbf{13.6} & \textbf{11.1} & \textbf{9.5} & \textbf{45.8} & {46.7} & \textbf{50.7} \\

\midrule
\multirow{4}{*}{\textbf{FP4}} 
& \multirow{2}{*}{MR-GPTQ} & Merged  & 43.2 & 27.0 & 18.9 & 42.6 & 47.0 & 50.7 & 60.0 & 2e2 & 93.0 & 38.0 & 37.5 & 39.2 \\
&  & Online & 22.8 & 14.9 & 11.1 & 44.6 & 51.1 & \textbf{57.2} & 31.1 & 1e2 & 41.8 & 39.5 & 36.8 & 39.9 \\
& \multirow{2}{*}{\OurAlgo\(^*\)} & Merged & 21.0 & 16.6 & 10.6 & 44.2 & 51.6 & 54.9 & 21.4 & 16.6 & 12.0 & 42.4 & \textbf{45.5} & 47.7 \\
&  & Online & 23.5 & 16.1 & 10.4 & 44.8 & \textbf{52.2} & 55.0 & 22.4 & 17.8 & 14.0 & 41.8 & 45.2 & 47.4 \\
& \OurAlgo\(^\dag\) & Merged & \textbf{18.0} & \textbf{12.8} & \textbf{9.8} & \textbf{46.1} & 51.5 & {55.6} & \textbf{15.6} & \textbf{12.4} & \textbf{9.8} & \textbf{43.2} & 45.0 & \textbf{49.6} \\
\midrule
\multirow{5}{*}{\textbf{MXFP4}} 
& \multirow{2}{*}{MR-GPTQ} & Merged & 14.2 & 11.2 & 7.4 & \textbf{50.3} & 57.5 & 63.4 & {16.9} & 14.9 & 9.6 & \textbf{46.8} & 48.7 & {52.8} \\
&  & Online & {14.0} & {11.1} & {7.3} & 50.1 & \textbf{58.6} & \textbf{63.5} & {16.9} & {13.8} & {9.3} & 46.6 & {49.4} & 52.2 \\
& \multirow{2}{*}{\OurAlgo\(^*\)} & Merged & 14.2 & 11.4 & 7.4 & 48.3 & 57.5 & 63.1 & 17.1 & 14.0 & 9.9 & 46.0 & 48.7 & 50.6 \\
&  & Online & {14.0} & 11.6 & 7.4 & 49.2 & 58.0 & 62.6 & 17.1 & 14.0 & 9.6 & 45.8 & 48.8 & 51.7 \\
& \OurAlgo\(^\dag\) & Merged & \textbf{13.2} & \textbf{9.6} & \textbf{7.2} & 49.1 & 57.2 & 63.0 & \textbf{11.8} & \textbf{9.3 }& \textbf{7.8} & 46.7 & \textbf{50.5} & \textbf{54.2} \\
\bottomrule
\end{tabular}}
\label{tbl:ablation_fusion}
\end{table*}

\paragraph{Additional Experiments with SmolLM3.} The permutation equivariance property in Definition~\ref{def:permutation-equivariant-region} holds for any contiguous subgraph of elementwise operations, which are prevalent across modern transformer architectures (Figure~\ref{fig:perq_architecture}); \OurAlgo{} is therefore not architecture-specific.
While our main experiments span Llama3 and Qwen3 across model sizes and dimensionalities (Table~\ref{tbl:block_vector_baselines}), here we add SmolLM3 3B as a third architecture and report INT4 W4A4 results in Table~\ref{tbl:smollm3_int4}.
Consistent with Table~\ref{tbl:block_vector_baselines}, \OurAlgo\(^*\) and \OurAlgo\(^\dag\) substantially outperform MR-GPTQ and MR-Qronos on both WikiText2 perplexity and the five zero-shot tasks.

\begin{table*}[h]
\centering
\caption{We report WikiText2 perplexity ($\downarrow$) and zero-shot accuracy ($\uparrow$) for SmolLM3 3B quantized to INT4 (W4A4), following the same configuration as Table~\ref{tbl:block_vector_baselines}.}
\resizebox{0.6\textwidth}{!}
{\begin{tabular}{c|c|ccccc}
\toprule
\textbf{Method} & \textbf{Wiki2} & \textbf{ARC-C} & \textbf{ARC-E} & \textbf{PIQA} & \textbf{WinoG} & \textbf{HellaS} \\
\midrule
BF16              & 8.1          & 41.1          & 49.7          & 76.8          & 57.5          & 53.9 \\
\midrule
MR-GPTQ           & 2e3          & 21.1          & 26.5          & 54.6          & 50.1          & 26.6 \\
MR-Qronos         & 15.6         & 28.7          & 43.2          & 66.5          & 52.8          & 38.9 \\
\OurAlgo\(^*\)    & \textbf{9.9} & \textbf{35.4} & \textbf{46.0} & \textbf{73.3} & \textbf{54.4} & \textbf{48.2} \\
\OurAlgo\(^\dag\) & 10.0         & 33.3          & 45.8          & 71.8          & 53.5          & 46.1 \\
\bottomrule
\end{tabular}}
\label{tbl:smollm3_int4}
\end{table*}

\section{Proofs}
\label{appendix:proofs}

\subsection{Proof of Proposition \ref{prop:prop1}}
\label{appendix:proof_prop1}

\propOne*

\begin{proof}
Given that \( \Vert X \Vert_1 = \delta d \Vert X \Vert_\infty \), let \( R_i \in \mathbb{R}^d \) denote the \(i\)-th column in \( R \) for \( i \in [d] \).
Given that \( R \) is a {normalized} Hadamard matrix, it follows that \( \Vert R_i \Vert_\infty = 1 / \sqrt{d}\).
One can bound \( \vert XR_i \vert \) via H\"older's inequality \cite{Holder1889} such that
\begin{align}
   |XR_i| &\leq \Vert X \Vert_1 \Vert R_i \Vert_\infty, \nonumber \\
   |XR_i| &\leq \delta {\sqrt{d}} \Vert X \Vert_\infty, \nonumber
\end{align}
for all \( i \). Thus, \( \Vert XR\Vert_\infty \leq \delta \sqrt{d}\Vert X \Vert_\infty \), concluding the proof.
\end{proof}

\begin{remark}[{Outlier Suppression and Energy Concentration for Full-Vector Hadamard Rotations}]
\label{remark:prop1-l2}
We can arrive at a similar inequality as Equation~\ref{eq:full-vector-bound} with an alternate definition of $\delta' = \Vert X \Vert_2 / (\sqrt{d} \Vert X \Vert_\infty)${, which can be interpreted as the concentration of energy in~\( X \).} Given that $R$ is a normalized Hadamard matrix, it follows that $\Vert R_i \Vert_2 = 1$ for $i \in [d]$. One similarly can bound $|XR_i|$ via H\"older's inequality \cite{Holder1889} such that
\begin{align}
   |XR_i| &\leq \Vert X \Vert_2 \Vert R_i \Vert_2, \nonumber \\
   |XR_i| &\leq \delta' {\sqrt{d}} \Vert X \Vert_\infty, \nonumber
\end{align}
for all $i$. Thus, $\Vert XR \Vert_\infty \leq \delta' \sqrt{d} \Vert X \Vert_\infty$. Since $\Vert X \Vert_\infty \leq \Vert X \Vert_2 \leq \sqrt{d}\Vert X \Vert_\infty$, we have $\delta' \in [\sqrt{d^{-1}}, 1]$.
This bound therefore states that only in the extreme case of $\delta'=1/\sqrt{d}$ are rotations guaranteed to not make outliers worse (\textit{i.e.}, \( \Vert XR \Vert_\infty \leq \Vert X \Vert_\infty \)).
{Therefore, unlike mass concentration, energy concentration does not deterministically yield a sufficient condition for outlier suppression with Hadamard rotations.}
\end{remark}

\subsection{Proof of Proposition \ref{prop:block_rotations}}
\label{appendix:proof_prop2}

\propTwo*

\begin{proof}
Given that block rotation $\Tilde{R} = I_n \otimes R = \text{diag}\left( R, \dots, R \right)$ is an orthonormal block-diagonal matrix composed of $n$ copies of a {normalized} Hadamard matrix $R \in \mathbb{R}^{b \times b}$, it follows that
\[
\Vert X \Tilde{R} \Vert_\infty = \max_j \Vert X_{\{j\}} R \Vert_\infty,
\]
where \( X_{\{j\}} \in \mathbb{R}^{b} \) is the \( j \)-th block of input activation vector \( X \in \mathbb{R}^d \).

Given that \( \Vert X_{\{j\}} \Vert_1 = \delta_{\{j\}} b \Vert X_{\{j\}} \Vert_\infty \), let \( R_i \in \mathbb{R}^b \) denote the \( i \)-th column in \( R \) with \( i = 1, \ldots, b \). Given that \( \Vert R_i \Vert_\infty = 1 / \sqrt{b} \),  one can follow Proposition \ref{prop:prop1} to bound \( \vert X_{\{j\}} R_i \vert \) via H\"older's inequality \cite{Holder1889} such that
\begin{align}
    \vert X_{\{j\}} R_i \vert & \leq \Vert X_{\{j\}} \Vert_1 \Vert R_i \Vert_\infty, \nonumber \\
    \vert X_{\{j\}} R_i \vert & \leq \delta_{\{j\}} \sqrt{b} \Vert X_{\{j\}} \Vert_\infty, \nonumber
\end{align}
for all \( i = 1, \ldots, b\). Thus, \( \Vert X \Tilde{R} \Vert_\infty = \max_{j \in [n] } \Vert X_{\{j\}} R \Vert_\infty \leq \max_{j \in [n] } \delta_{\{j\}} \sqrt{b} \Vert X_{\{j\}} \Vert_\infty \),
concluding the proof.
\end{proof}

\begin{remark}[{Outlier Suppression and Energy Concentration for Block Hadamard Rotations}]
\label{remark:l1-l2-deterministic}
Building from Remark~\ref{remark:prop1-l2}, let $\delta'_{\{ j \}} = \Vert X_{\{ j \}} \Vert_2 / (\sqrt{b} \Vert X_{\{ j \}} \Vert_\infty)$. One can similarly bound $|X_{\{ j \}} R_i|$ such that
\begin{align}
    \vert X_{\{j\}} R_i \vert & \leq \Vert X_{\{j\}} \Vert_2 \Vert R_i \Vert_2, \nonumber \\
    \vert X_{\{j\}} R_i \vert & \leq \delta'_{\{j\}} \sqrt{b} \Vert X_{\{j\}} \Vert_\infty, \nonumber
\end{align}
for all \( i = 1, \ldots, b\). Thus, \( \Vert X \Tilde{R} \Vert_\infty = \max_{j \in [n] } \Vert X_{\{j\}} R \Vert_\infty \leq \max_{j \in [n] } \delta'_{\{j\}} \sqrt{b} \Vert X_{\{j\}} \Vert_\infty \), {where $\delta'_{\{ j \}} \in [\sqrt{b^{-1}}, 1]$ for all $j \in [n]$.
This further implies that, unlike mass concentration, energy concentration does not deterministically yield a sufficient condition for outlier suppression with block Hadamard rotations.}
\end{remark}

\subsection{Proof of Corollary~\ref{cor:deterministc_evolution}}

\propThree*

\begin{proof}
Given that \( \Vert X_{\{j\}} \Vert_1 = \delta_{\{j\}} b \Vert X_{\{j\}} \Vert_\infty \), where \( X_{\{j\}} \in \mathbb{R}^b \) denotes the \( j \)-th block of \( X \in \mathbb{R}^d \) with \( d = nb \), let \( X_{\{j,i\}} \in \mathbb{R}^{b'} \) denote the \( i \)-th sub-block of \( X_{\{j\}} \) for \( i \in [k] \) with \( b = k b' \). Then, 
\begin{align}
\mathcal{Z}(b;X) &= \max_{j \in [n]} \delta_{\{j\}} \sqrt{b} \; \Vert X_{\{j\}} \Vert_\infty = \max_{j \in [n]} \Vert X_{\{j\}}\Vert_1 / \sqrt{b} \nonumber \\
&= \max_{j \in [n]}\left( \textstyle \sum_{i \in [k]} \Vert X_{\{j,i\}}\Vert_{1} \right)/\sqrt{kb'} \nonumber \\
& \leq \max_{j \in [n], i \in [k]} \sqrt{\frac{k}{b'}} \; \Vert X_{\{j,i\}} \Vert_1 = \sqrt{k}\; \mathcal{Z}(b';X) \nonumber
\end{align}
Thus, \( \mathcal{Z}(b;X) \leq \sqrt{k}\; \mathcal{Z}(b';X) \), concluding the proof.
\end{proof}

\subsection{Proof of Proposition~\ref{prop:probablist_evolution}}
\label{appendix:proof_prop3}

We first state two auxiliary lemmas for completeness; both are well-known consequences of Hoeffding's lemma \cite{vershynin2018high}.

\begin{lemma}[Rademacher Signed Sum is Sub-Gaussian, cf. Section 2.5~\cite{vershynin2018high}]
\label{lemma:hoeffdings}
Given $Y=(Y_1,\dots,Y_d)\in\mathbb{R}^d$, let $S_1,\dots,S_d$ be independent Rademacher random variables such that $S_i\sim\mathrm{Rad}(\pm1)$, and define
\(
Z = \textstyle \sum_{i=1}^d S_i Y_i.
\)
Then for all $t\in\mathbb{R}$,
\[
\mathbb{E}\!\left[e^{tZ}|Y\right]
\le
\exp\!\left(\frac{t^2\|Y\|_2^2}{2}\right).
\]
In particular, conditional on $Y$, the random variable $Z$ is sub-Gaussian with variance proxy $\|Y\|_2^2$.
\end{lemma}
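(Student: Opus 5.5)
This is the standard Hoeffding moment-generating-function bound. Fix $Y$ throughout and work conditionally.

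\textbf{Step 1: reduce to scalars.} Write $Z = \sum_{i=1}^d S_i Y_i$. Conditionally on $Y$, the Rademacher signs are independent, so $\mathbb{E}[e^{tS_iY_i}\mid Y] = \cosh(tY_i)$. Each factor satisfies $\cosh(tY_i) \le e^{t^2Y_i^2/2}$, which follows by comparing Taylor series termwise, since $(2k)! \ge 2^k k!$.

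\textbf{Step 2: take the product.} Multiplying the factors gives
\[
\mathbb{E}[e^{tZ}\mid Y] \le \prod_{i=1}^d e^{t^2Y_i^2/2} = \exp\!\left(\frac{t^2\Vert Y\Vert_2^2}{2}\right),
\]
for every $t \in \mathbb{R}$.

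\textbf{Step 3: conclude.} By definition of the variance proxy, the bound in Step 2 states exactly that $Z$ is sub-Gaussian with variance proxy $\Vert Y\Vert_2^2$, conditionally on $Y$.

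The only step that needs care is the scalar inequality $\cosh(x) \le e^{x^2/2}$. The termwise check is
\[
\frac{x^{2k}}{(2k)!} \le \frac{(x^2/2)^k}{k!} \quad\Longleftrightarrow\quad 2^k k! \le (2k)!,
\]
and the right-hand inequality holds because $(2k)! = \prod_{m=1}^{k}(2m)(2m-1) \ge \prod_{m=1}^{k} 2m = 2^k k!$.
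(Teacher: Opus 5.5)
Your proof is correct and is essentially the argument the paper relies on. The paper gives no proof of its own and only cites the lemma as a well-known consequence of Hoeffding's lemma (Vershynin); your route, factorizing the conditional moment generating function by independence and bounding each factor, is exactly that standard proof. The one cosmetic difference is that you prove the per-coordinate bound $\cosh(tY_i)\le e^{t^2Y_i^2/2}$ directly by Taylor comparison, whereas the paper's citation would apply Hoeffding's lemma to $S_iY_i\in[-|Y_i|,|Y_i|]$, which gives the identical constant.
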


\begin{lemma}[$\ell_\infty$ Concentration from Conditional Sub-Gaussian, cf. Theorem 2.6.2~\cite{vershynin2018high}]
\label{lemma:linf-hoeffding}
Let $V=(V_1,\dots,V_d)\in\mathbb{R}^d$ be a random vector and let $Y$ be a (possibly random) vector.
Assume that, conditional on $Y$, each $V_i$ is zero-mean and sub-Gaussian with variance proxy $\nu_i^2(Y)$ in the sense that
\[
\mathbb{E}\!\left[e^{t V_i}\mid Y\right] \le \exp\!\left(\frac{t^2 \nu_i^2(Y)}{2}\right)
\quad \text{for all } t\in\mathbb{R}.
\]
Then, conditional on $Y$, for all $\tau\ge 0$,
\[
\mathbb{P}\!\left(\|V\|_\infty \ge \tau \,\middle|\, Y\right)
\le
2 \sum_{i=1}^d \exp\!\left(-\frac{\tau^2}{2\nu_i^2(Y)}\right)
\le
2d \exp\!\left(-\frac{\tau^2}{2\nu_{\max}^2(Y)}\right),
\]
where $\nu_{\max}^2(Y) =\max_{i\in[d]} \nu_i^2(Y)$.
\end{lemma}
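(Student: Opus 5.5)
The plan is a conditional Chernoff bound on each coordinate, followed by a union bound over the $d$ coordinates. Throughout, we work conditionally on $Y$: every probability and expectation is taken with respect to the conditional law given $Y$. Markov's inequality holds for conditional probabilities, so no measure-theoretic subtlety arises beyond fixing a regular version of the conditional law.

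\textbf{Step 1 (one-sided tail).} Fix $i \in [d]$ and $\tau > 0$, and assume $\nu_i^2(Y) > 0$. For any $t > 0$, Markov's inequality applied to $e^{tV_i}$ together with the MGF hypothesis gives
\[
\mathbb{P}(V_i \ge \tau \mid Y) \le e^{-t\tau}\,\mathbb{E}\!\left[e^{tV_i}\mid Y\right] \le \exp\!\left(-t\tau + \tfrac{t^2 \nu_i^2(Y)}{2}\right).
\]
We then optimize the exponent by choosing $t = \tau/\nu_i^2(Y)$, which yields $\exp\!\left(-\tau^2/(2\nu_i^2(Y))\right)$.

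\textbf{Step 2 (other tail).} Because the hypothesis holds for \emph{all} $t \in \mathbb{R}$, the variable $-V_i$ satisfies the same MGF bound. Repeating Step 1 bounds $\mathbb{P}(V_i \le -\tau \mid Y)$ by the same quantity. Adding the two tails gives
\[
\mathbb{P}(|V_i| \ge \tau \mid Y) \le 2\exp\!\left(-\frac{\tau^2}{2\nu_i^2(Y)}\right).
\]
The zero-mean assumption is not actually needed for this inequality; it is implied by the MGF bound anyway.

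\textbf{Step 3 (union bound).} The event $\{\|V\|_\infty \ge \tau\}$ equals $\bigcup_{i} \{|V_i| \ge \tau\}$. A union bound therefore gives the first inequality of the lemma.

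\textbf{Step 4 (uniform bound).} For the second inequality, note that $\nu_i^2(Y) \le \nu_{\max}^2(Y)$ and that $x \mapsto \exp(-\tau^2/(2x))$ is nondecreasing on $(0,\infty)$. Each summand is therefore at most $\exp\!\left(-\tau^2/(2\nu_{\max}^2(Y))\right)$, and summing the $d$ terms gives the stated bound.

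\textbf{Edge cases.} These are the only step requiring any care, and they are not a real obstacle.
\begin{itemize}
\item If $\tau = 0$, the right-hand side is $2d \ge 1$, so the bound holds trivially.
\item If $\nu_i^2(Y) = 0$, the hypothesis reads $\mathbb{E}[e^{tV_i}\mid Y] \le 1$ for all $t$. Expanding to second order at $t \to 0$ forces the conditional second moment $\mathbb{E}[V_i^2 \mid Y]$ to be zero, so $V_i = 0$ almost surely. Its tail probability is then $0$, which matches the convention $\exp(-\tau^2/0) = 0$ for $\tau > 0$.
\end{itemize}
With these conventions the argument goes through verbatim.
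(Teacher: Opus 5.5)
Your proposal is correct: a conditional Chernoff bound on each tail, followed by a union bound over the $d$ coordinates and monotonicity in $\nu_i^2(Y)$, is the standard argument. The paper does not write out a proof of this lemma; it cites Vershynin, and your argument is the standard one behind that citation, including a valid treatment of the degenerate cases $\tau = 0$ and $\nu_i^2(Y) = 0$.
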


{To prove Proposition~\ref{prop:probablist_evolution}, we separately consider the sign and magnitude of activation vector \( X \in \mathbb{R}^d \) such that \( X = \mathrm{sign}(X) \odot |X| = S \odot Y\), then model \( S = (S_1, \ldots, S_d) \) as a random vector with i.i.d. Rademacher entires, \( S_i \sim \mathrm{Rad}(\pm1)\).
As such, we clarify two core assumptions: (1) we assume $\mathbb{P}_S[+1 \mid Y]=\mathbb{P}_S[-1 \mid Y]=0.5$, and (2) we assume each entry \( S_i \) is i.i.d. Rademacher across vector coordinates within each activation vector. To ensure these assumptions are reasonable, we empirically evaluate Qwen3 1.7B using WikiText2. First, we find that the fraction of positive signs within a vector concentrates tightly around $0.5$, with a minimum of $0.47$, a max of $0.53$, and a mean of $0.50$ across tokens and layers. Second, we find that the pairwise correlations of \( S_i \) for \( i \in [d] \) are close to a Rademacher baseline estimated over $128$ tokens; the off-diagonals of \( \mathbb{E}_{S|Y}[S^T S] \) have a standard deviation that ranges between $0.08$ and $0.09$ across layers, where a Rademacher distribution would yield \( 1 / \sqrt{128} = 0.088 \).
Thus, having verified our assumptions are reasonable, we proceed with proving Proposition~\ref{prop:probablist_evolution} using Lemmas~\ref{lemma:hoeffdings} and~\ref{lemma:linf-hoeffding}.}

\propFour*

\begin{proof}
Let \( \Tilde{R} = I_n \otimes R \) be a block rotation, where \( R \in \mathbb{R}^{ b \times b } \) is a normalized Hadamard matrix with entries \( R_{u,v} = \{\pm 1 / \sqrt{b}\} \) for all \( u,v \in [b] \), and \( \Tilde{R} \in \mathbb{R}^{d \times d} \) with \( d = nb \).
Given \(Y = (Y_1, \ldots, Y_d) \in \mathbb{R}^d \), let \( S = (S_1, \ldots, S_d) \) be a random vector with i.i.d.\ Rademacher entries \(S_i \sim \text{Rad}(\pm1) \). Define activation vector \(X \in \mathbb{R}^d\) coordinate-wise as \( X_i = S_i Y_i \, \) for \( i \in [d] \); note that \( \vert X_i \vert = \vert Y_i \vert \) for all \( i \).
Let \( \Tilde{X}  = X\Tilde{R} \) be the result of rotating \( X \) by \( \Tilde{R} \).

Define block index \( \beta(i) = \lceil i / b \rceil \) with corresponding index set  \( \mathcal{B}_{\beta(i)} = \{ (\beta(i)-1)b + 1, \ldots, \beta(i) b \} \).
Since \( \Tilde{R} = I_n \otimes R \) with \( R \in \mathbb{R}^{b \times b} \), the \( i \)-th coordinate of \( \Tilde{X} \) is then
\[
\Tilde{X}_i = \sum_{k \in [d]} S_k Y_k \tilde{R}_{k,i} = \sum_{k \in \mathcal{B}_{\beta(i)}} S_k Y_k \Tilde{R}_{k,i}.
\]
For \( k \in \mathcal{B}_{\beta(i)}\), write \( \Tilde{R}_{k,i} = \frac{1}{\sqrt{b}} \gamma_{k,i} \) with \( \gamma_{k,i} \in \{ \pm1 \} \), and define \( S^{(i)}_k = \gamma_{k,i} S_k \). Since \( \gamma_{k,i} \) is fixed and \( S_k \sim \mathrm{Rad}(\pm1) \), it follows that \( S^{(i)}_k \) remain i.i.d.\ Rademacher for \(k \in \mathcal{B}_{\beta(i)} \). Hence,
\[
\Tilde{X}_i = \frac{1}{\sqrt{b}} \sum_{k \in \mathcal{B}_{\beta(i)}} S_k^{(i)} Y_k.
\]

Then, from Lemma~\ref{lemma:hoeffdings}, it follows that, conditional on \( Y \), \( \Tilde{X}_i \) is sub-Gaussian with variance proxy \( \Vert Y_{\{\beta(i)\}} \Vert_2^2 / b \), where \( Y_{\{\beta(i)\}} \) denotes block \( \beta(i) \) of \( Y \) and \( \beta(i) \in [n] \).
In particular, since \( \vert Y_k \vert = \vert X_k \vert \) for all \( k \in \mathcal{B}_{\beta(i)} \),
\[
\mathbb{E}[e^{t\Tilde{X}_i} | Y] \leq \exp \Big( \frac{t^2 \Vert X_{\{\beta(i)\}} \Vert^2_2 }{2b} \Big),
\]
where \( X_{\{\beta(i)\}} \) similarly denotes block \( \beta(i) \) of \( X \).

{Since \( S_i \) are zero-mean conditional on \( Y \)}, it follows from Lemma~\ref{lemma:linf-hoeffding} that
\[
\mathbb{P}\left(\Vert X\Tilde{R} \Vert_\infty \geq \tau \mid Y \right) \leq 2d \; \exp \Big( -\frac{b\tau^2}{2 \max_{j \in [n]} \Vert X_{\{j\}} \Vert^2_2} \Big).
\]
Solving for \( \tau \) such that \( \mathbb{P}\left(\Vert X\Tilde{R} \Vert_\infty \geq \tau \mid Y \right) \leq \varepsilon \), it follows that
\begin{align}
    2d\; \exp \Big(- \frac{b\tau^2}{2 \max_{j \in [n]} \Vert X_{\{j\}} \Vert^2_2} \Big) & \leq \varepsilon \\ \nonumber
    \frac{b\tau^2}{2 \max_{j \in [n]} \Vert X_{\{j\}} \Vert^2_2} & \geq \log \Big( \frac{2d}{\varepsilon} \Big) \\ \nonumber
    \tau & \geq  \sqrt{\frac{2}{b} \log \Big( \frac{2d}{\varepsilon} \Big) \max_{j \in [n]} \Vert X_{\{j\}} \Vert^2_2} \nonumber
\end{align}
Therefore, conditional on \( Y \) and with
\[
\tau = \sqrt{\frac{2}{b} \log \Big( \frac{2d}{\varepsilon} \Big) \max_{j \in [n]} \Vert X_{\{j\}} \Vert^2_2},
\]
we obtain
\[
\Vert X\Tilde{R} \Vert_\infty \leq \sqrt{\frac{2}{b} \log \Big( \frac{2d}{\varepsilon} \Big) \max_{j \in [n]} \Vert X_{\{j\}} \Vert^2_2} \leq \sqrt{\frac{2}{b} \log \Big( \frac{2d}{\varepsilon} \Big)  \Vert X \Vert^2_2}
\]
with probability at least \( 1 - \varepsilon \), concluding the proof.
\end{proof}

\begin{remark}[Probabilistic Analysis of Outlier Suppression]
\label{remark:l1-l2-probabilistic}
Proposition~\ref{prop:probablist_evolution} provides a complementary geometric perspective to the outlier suppression capabilities of block Hadamard rotations, namely that outlier suppression is probabilistically limited by the block with the largest mass.
Recall the well-known relationship between mass and energy: for \( X \in \mathbb{R}^d \), it can be shown that \( \Vert X\Vert_2 \leq \Vert X \Vert_1 \) via
\begin{equation}
    \Vert X \Vert_1^2 = \Big( \sum_{i \in [d]} |X_i| \Big)^2 = \sum_{i \in [d]} X_i^2 + \sum_{i \in [d]} \sum_{j \in [d] \setminus i} |X_i| |X_j| \geq \sum_{i \in [d]} X_i^2 = \Vert X \Vert_2^2. \nonumber
\end{equation}
It therefore follows from Proposition~\ref{prop:probablist_evolution} that, conditional on \( Y \), Equation~\ref{eq:prob_bound} can be further bounded as
\begin{align*}
    \Vert X\Tilde{R} \Vert_\infty & \leq \sqrt{\frac{2}{b} \log \Big( \frac{2d}{\varepsilon} \Big) \max_{j \in [n]} \Vert X_{\{j\}} \Vert^2_2} \\
     & \leq \sqrt{\frac{2}{b} \log \Big( \frac{2d}{\varepsilon} \Big) \max_{j \in [n]} \Vert X_{\{j\}} \Vert^2_1} = \sqrt{2 \log \Big( \frac{2d}{\varepsilon} \Big)}  \; \max_{j \in [n]} \delta_{\{j\}} \sqrt{b} \Vert  X_{\{j\}} \Vert_\infty \nonumber
\end{align*}
with probability of at least $1 - \varepsilon$, where \( \delta_{\{j\}} = \Vert X_{\{j\}} \Vert_1 / (b \Vert X_{\{j\}} \Vert_\infty ) \).
Importantly, while this reduction again shows that outlier suppression is limited by the block with the largest mass, it does not yield a sufficient condition under which outlier suppression is guaranteed.
Furthermore, building from Remarks~\ref{remark:prop1-l2} and~\ref{remark:l1-l2-deterministic}, one can arrive at a similar insight using the energy concentration metric \( \delta'_{\{j\}} = \Vert X_{\{j\}} \Vert_2 / (\sqrt{b} \Vert X_{\{j\}} \Vert_\infty ) \).
\end{remark}

\end{document}